\pdfoutput=1 
\documentclass[11pt]{article}

\usepackage[font=times, citestyle=numeric]{kurbanlab}

\DeclareAffiliation{hbku}{%
  College of Science and Engineering, Hamad Bin Khalifa University, Doha, Qatar}

\DeclareAffiliation{tamu}{%
  Department of Computer and Electrical Engineering,
  Texas A\&M University, College Station, TX, USA}

\DeclareAffiliation{iub}{%
  Luddy School of Informatics, Computing, and Engineering,
  Indiana University Bloomington, Bloomington, IN, USA}

\title{Rank Reversal in Multilingual LLM Judges}
\Subtitle{A Label-Free Double-Centering Calibrator}
\RunningTitle{Rank reversal in multilingual LLM judges}

\Author[equal, orcid=0009-0002-6215-2483]{Alhasan Mahmood}{hbku}
\Author[equal, orcid=0009-0009-2183-6193]{Samir Abdaljalil}{tamu}
\Author[corresponding=hkurban@hbku.edu.qa, orcid=0000-0003-3142-2866]{Hasan Kurban}{hbku}

\Keywords{multilingual evaluation; LLM-as-a-judge; rank reversal; calibration; double centering}
\CodeURL{https://github.com/KurbanIntelligenceLab/multilingual-judge-calibration}
\Venue{Preprint}

\begin{document}
\maketitle

\begin{abstract}
Multilingual LLM judges produce different evaluator-backbone rankings depending on the prompt language: on an eight-language Agent-as-a-Judge benchmark, the top-ranked backbone alternates across English, Arabic, Chinese, Hindi, Japanese, Spanish, Turkish, and Swahili, and 7 of 15 backbone pairs show statistically significant pairwise rank reversal. We treat this as a measurement problem. The multilingual judge score decomposes additively into task difficulty, backbone skill, and a language-backbone interaction term, the last of which is recoverable without human labels by double-centering the cell-mean score matrix. We make this estimator (\textbf{Consensus-Based Calibration}, CBC) explicit, give an $O(1/\sqrt{n})$ finite-sample concentration bound with variance constant $(1-\tfrac{1}{m})(1-\tfrac{1}{k})$, and show that it is unbiased even when task-language interactions are present. Across 7{,}920 judge runs (6 backbones, 8 languages, 55 tasks, 3 frameworks), CBC raises held-out cross-task rank consistency $\tau$ from 0.650 to 0.902 and agrees with the held-out additive-model oracle in 100\% of per-language decisions versus 68.5\% raw; these are consistency diagnostics, not human-grounded correctness measures. On a separately collected M-RewardBench panel (7 languages, 1{,}500 items per language, 10{,}500 language-item instances, 5 evaluators), panel agreement with the public human gold preferences rises from 68.7\% to 76.6\% (gain 7.9 percentage points, 95\% CI $[6.0, 9.9]$), our strongest external evidence of downstream usefulness. The estimator is the standard two-way ANOVA interaction-recovery operation under sum-to-zero contrasts; our contribution is its application as a label-free post-hoc calibrator for multilingual LLM judges, an explicit finite-sample concentration bound, and an unbiasedness result that holds even under task-language misspecification. Code is available at \url{https://github.com/KurbanIntelligenceLab/multilingual-judge-calibration}
\end{abstract}

\printkeywords

\section{Introduction}
The use of LLM-based judges to evaluate model and agent outputs is now common across recent benchmarks \citep{zhuge2024agentjudge,zheng2024judging,tan2025judgebench,lambert2024rewardbench}. As this deployment extends across languages, multiple studies have shown that judge behavior is not stable across the prompt language: \citet{hada-etal-2024-large} report systematic score inflation in multilingual evaluators against native-speaker references; \citet{fu2025reliable} document weak cross-lingual consistency (Fleiss' $\kappa \approx 0.3$) across 25 languages; \citet{singh-etal-2025-global} show that multilingual benchmark construction itself encodes linguistic and cultural bias; and \citet{mahmood2026multilingualpromptlocalizationagentasajudge} report concrete backbone-ranking reversals when judge prompts are localized.

What is less settled is what to do about it without expensive new annotations. \citet{hada-etal-2024-large} calibrate against 20{,}000 human judgments; \citet{fu2025reliable} propose an ensemble; \citet{xu2026judgeaware} extend Bradley--Terry--Luce with judge-specific discrimination for pairwise comparisons; \citet{li2025calibraeval} address position-induced selection bias. None directly targets the additive language-backbone \emph{interaction} that appears when several backbones are scored on the same items across languages, even though that is the structure produced by any multi-evaluator multilingual benchmark.

This paper studies that structure. We adopt the standard two-way additive layout from analysis of variance \citep{scheffe1959analysis,searle1971linear}, $S(t, \ell, b) = \mu(t) + \alpha(b) + \beta(\ell, b) + \gamma(t, \ell) + \epsilon$, in which $\beta(\ell, b)$ is the language-backbone interaction we target when a language-invariant evaluator ranking is the operational goal. Under sum-to-zero normalization, $\beta(\ell, b)$ is identified by double-centering the cell-mean matrix. The estimator is standard in two-way ANOVA; the contribution is its application as a label-free post-hoc calibrator for multilingual LLM-judge matrices, together with a finite-sample concentration bound and a misspecification-robustness analysis. We call this Consensus-Based Calibration (CBC).

This two-way model does not cover a genuine task--language--backbone interaction $\gamma(t,\ell,b)$: backbone-specific task adaptation can be absorbed into the estimated language-backbone interaction rather than separated by double-centering. We treat this three-way effect as a central misspecification risk, not as evidence that every observed interaction is evaluative bias.

The internal experiments build on the previously introduced five-language multilingual Agent-as-a-Judge setting \citep{mahmood2026multilingualpromptlocalizationagentasajudge}. That prior work supplies the benchmark infrastructure and original language panel; this paper adds Japanese, Spanish, and Swahili, conducts the rank-reversal analysis, formalizes and evaluates CBC, and adds the separately collected M-RewardBench validation panel.

Our contributions are:

\begin{enumerate}
\item \textbf{Empirical characterization.} On the expanded eight-language Agent-as-a-Judge benchmark (7{,}920 judge runs across 6 backbones, 55 tasks, and 3 frameworks), 7 of 15 backbone pairs exhibit pairwise rank reversal (Benjamini--Hochberg corrected at FDR $=0.05$); the empirical top-ranked backbone alternates across all 8 languages.

\item \textbf{A label-free estimator with explicit guarantees.} CBC is one operation on the multi-evaluator score matrix and requires no human labels. Under independent homoskedastic Gaussian noise, $|\hat\beta(\ell, b) - \beta(\ell, b)|$ concentrates at $O(1/\sqrt{n})$ with explicit variance constant $(1{-}1/m)(1{-}1/k)$ (Prop.~\ref{prop:convergence}). The central non-trivial guarantee, beyond classical ANOVA, is that the estimator remains \emph{unbiased} even when task-language interactions $\gamma(t,\ell)$ are present (Prop.~\ref{prop:misspec}).

\item \textbf{Decision-level and external validation.} CBC raises held-out cross-task rank consistency $\tau$ from $0.650$ to $0.902$ and agrees with the held-out additive-model oracle in 100\% of per-language decisions versus 68.5\% raw; this is model-based consistency, not human-grounded correctness. On a separately collected M-RewardBench panel of 1{,}500 items per language (10{,}500 language-item instances across 7 languages and 5 evaluator backbones), CBC raises $\tau$ from $0.430$ to $0.900$, while external human-anchor agreement with the public gold preferences \citep{gureja-etal-2025-rewardbench} rises from 68.7\% to 76.6\% (+7.9 percentage points, 95\% CI $[6.0, 9.9]$), our strongest external evidence of downstream usefulness.
\end{enumerate}

\noindent
\textbf{Scope.} CBC removes the language-backbone interaction term and nothing else. This correction is appropriate when the deployment objective is a language-invariant evaluator ranking; if language-specific evaluator specialization is itself the target, removing the interaction can remove meaningful capability differences, so CBC should be treated as a sensitivity analysis. A shared language-level shift $g(\ell)$ where all evaluators systematically over- or under-score one language is invisible to double-centering; correcting it would require an external anchor.
\section{Related Work}
\paragraph{Multilingual LLM-judge evaluation.}
\citet{hada-etal-2024-large} calibrate multilingual evaluators against 20{,}000 native-speaker judgments and report systematic score inflation in uncalibrated evaluators. \citet{fu2025reliable} survey 25 languages, document weak cross-lingual consistency (Fleiss' $\kappa \approx 0.3$), and propose an ensemble strategy. \citet{sheth2026crosslingual} introduce a Universal Criteria Set for cross-lingual transfer with minimal supervision. Multilingual meta-evaluation benchmarks \citep{doddapaneni2025cross,son2024mmeval} and analyses of multilingual benchmark construction bias \citep{singh-etal-2025-global} provide additional context. Known judge biases (position, verbosity, self-enhancement) are characterized in \citet{zheng2024judging}; benchmarks for judge quality include RewardBench \citep{lambert2024rewardbench}, JudgeBench \citep{tan2025judgebench}, and MT-Bench-101 \citep{bai-etal-2024-mt}. Our work is label-free, post-hoc on an already-collected multi-backbone score matrix, and targets the language-backbone interaction term that arises in any multi-evaluator multilingual benchmark.

\paragraph{Label-free judge calibration and aggregation.}
\citet{xu2026judgeaware} extend the Bradley--Terry--Luce model with judge-specific discrimination for pairwise comparisons. \citet{li2025calibraeval} address position-induced selection bias in pairwise LLM judges. The Dawid--Skene tradition \citep{dawid1979maximum,whitehill2009glad,hovy2013mace,paun2018comparing} and learning-from-crowds methods \citep{raykar2010learning,li-2019-truth} estimate annotator reliability on categorical labels without ground truth; Item Response Theory \citep{embretson2013item,pmlr-v235-maia-polo24a} models rater-item interactions for psychometric measurement. Empirical work shows substantial cross-task LLM-judge variability \citep{bavaresco2024llms}, and disagreement-aware NLP evaluation \citep{pavlick2019inherent,leonardelli-etal-2023-lewidi} emphasizes preserving annotator variation. Our setting is continuous pointwise score matrices with a structured language-backbone interaction, where the relevant object is an interaction term in a two-way layout rather than a pairwise reliability parameter.

\paragraph{Two-way layouts and agentic code evaluation.}
The additive decomposition is the standard two-way ANOVA layout \citep{scheffe1959analysis,searle1971linear}; double centering under sum-to-zero contrasts is the standard interaction-recovery operation. Classical references target F-test inference under a correctly specified model. The two formal properties we use, a finite-sample uniform bound on $|\hat\beta-\beta|$ across all $mk$ cells with explicit variance constant $(1{-}\tfrac{1}{m})(1{-}\tfrac{1}{k})$ (Prop.~\ref{prop:convergence}) and unbiasedness under an unmodeled task-language interaction $\gamma(t,\ell)$ (Prop.~\ref{prop:misspec}), are not standard there. We build on Agent-as-a-Judge \citep{zhuge2024agentjudge} and a previously introduced multilingual Agent-as-a-Judge setting \citep{mahmood2026multilingualpromptlocalizationagentasajudge}; M-RewardBench \citep{gureja-etal-2025-rewardbench} supplies the public preference instances for our external validation panel.

\section{Score Model and Identifiability}
\label{sec:framework}

\subsection{Setup and Notation}

\paragraph{Terminology.} We use \emph{evaluator backbone} $b$ for the LLM that scores an item, \emph{judge framework} $f$ for the evaluation harness and prompt/rubric protocol (MetaGPT, GPT-Pilot, or OpenHands), and \emph{evaluated agent output} for the code artifact produced for task $t$ that the judge scores. For the internal benchmark, the CBC task-level matrix averages the three framework-level scores for each $(t,\ell,b)$; framework effects are retained as fixed effects in framework-level variance checks. Thus our rankings compare evaluator backbones on shared evaluated outputs, not judge frameworks or the underlying agents.

We have $n$ tasks $\mathcal{T}$, $m$ backbones $\mathcal{B}$, and $k$ languages $\mathcal{L}$. The judge produces a score $S(t, \ell, b) \in [0, 100]$, modeled as a two-way additive layout with interaction \citep{scheffe1959analysis,searle1971linear}:
\begin{equation}
S(t, \ell, b) = \mu(t) + \alpha(b) + \beta(\ell, b) + \gamma(t, \ell) + \epsilon,
\label{eq:full}
\end{equation}
where $\mu(t)$ is task difficulty, $\alpha(b)$ is backbone skill, $\beta(\ell, b)$ is the \emph{language-backbone interaction} targeted by CBC, $\gamma(t, \ell)$ is a residual task-language effect (any language-uniform offset across backbones is absorbed into $g(\ell) = \frac{1}{n}\sum_t \gamma(t,\ell)$), and $\epsilon$ is mean-zero noise. A framework term $\phi(f)$ for pooling across judge frameworks does not depend on both $\ell$ and $b$ and cancels under the estimator below. Eq.~\ref{eq:full} is the standard Type~III two-way ANOVA layout with factors backbone and language and an interaction term.

The interpretation of $\beta(\ell,b)$ as an evaluative bias is operational rather than intrinsic: it is appropriate when deployment requires a shared evaluator ranking for the same items across languages. If a language-backbone interaction instead reflects genuine language-specific evaluator capability or task competence, it should not automatically be removed; in that setting, CBC is best treated as a sensitivity analysis.

\subsection{Pairwise Rank Reversal}

The interaction term $\beta(\ell, b)$ is empirically non-trivial whenever per-language backbone rankings disagree. Writing $d_{ij}(\ell) \triangleq \mathbb{E}_t[S(t, \ell, b_i) - S(t, \ell, b_j)]$ for the per-language gap between two backbones, measured in score points:

\begin{definition}[Pairwise rank reversal]
\label{def:diversity}
Backbones $b_i, b_j$ exhibit \emph{pairwise rank reversal} with respect to a language set $\mathcal{L}$ if there exist $\ell_a, \ell_b \in \mathcal{L}$ such that
$d_{ij}(\ell_a) \cdot d_{ij}(\ell_b) \leq -\delta$ for some strength $\delta > 0$.
\end{definition}

Because $\delta$ is a product of two score gaps, its units are squared score points ($\mathrm{points}^2$). A single satisfying pair rules out universal dominance between those two backbones. On the expanded eight-language Agent-as-a-Judge benchmark (Section~\ref{sec:experiments}, Table~\ref{tab:diversity}), 7 of 15 backbone pairs satisfy this condition under task-level one-sided tests with Benjamini--Hochberg correction at FDR $= 0.05$; the empirical top-ranked backbone alternates across the eight languages, so no universal winner exists in the observed data. The strongest reversal is GPT-4o vs.\ GPT-5.4: GPT-4o leads in English by $+35.63$ points and trails in Spanish by $-11.25$ points, giving $\delta = 400.79~\mathrm{points}^2$ from the unrounded means. Figure~\ref{fig:interaction_heatmap} visualizes the corresponding centered language-backbone interaction matrix $\hat\beta(\ell, b)$ recovered by CBC; these $\hat\beta$ cells are not the raw pairwise gaps $d_{ij}$.

\begin{figure}[t]
\centering
\includegraphics[width=0.98\columnwidth]{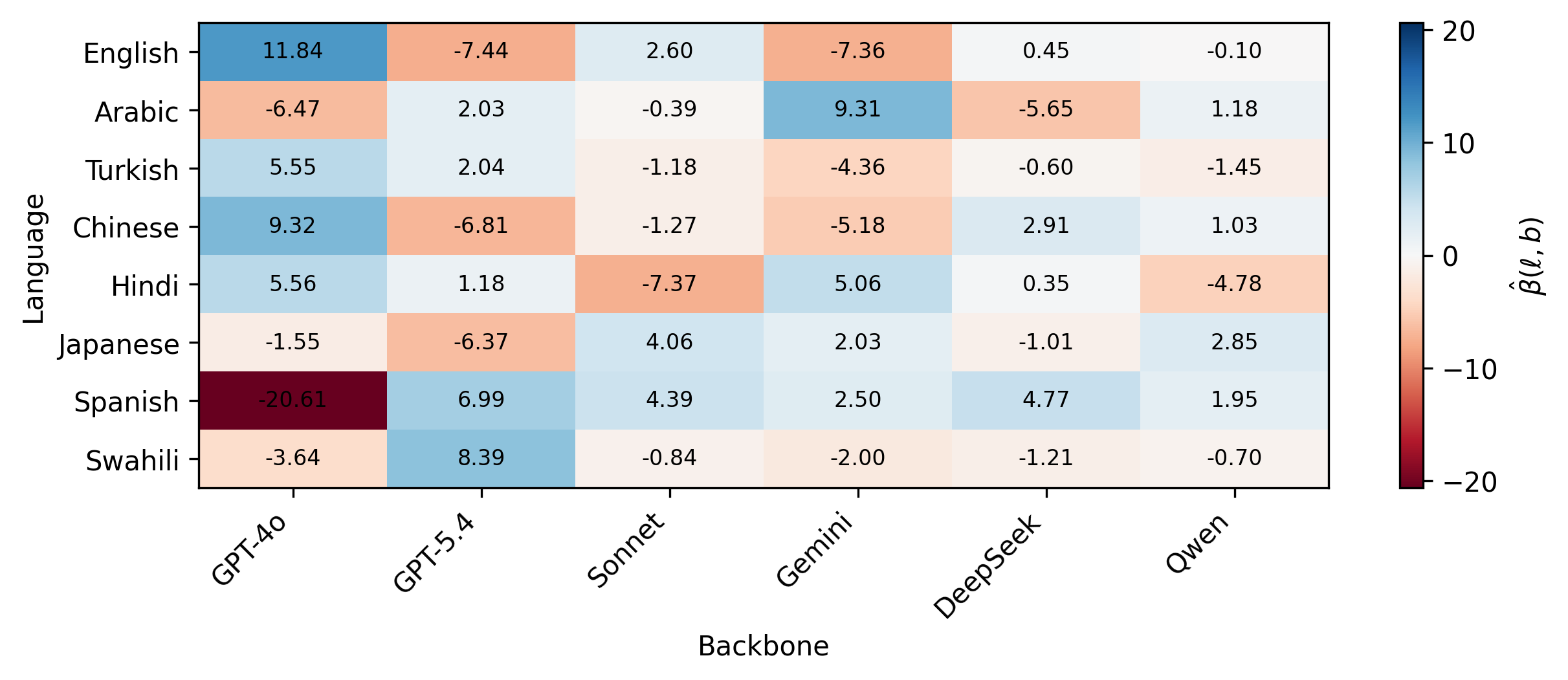}
\caption{Estimated centered language-backbone interaction $\hat\beta(\ell, b)$ from the expanded multilingual Agent-as-a-Judge benchmark (8 languages, 6 backbones), computed by double-centering the framework-averaged task scores. Positive values: backbone scores higher than expected in that language. These cells are distinct from the raw pairwise gaps $d_{ij}$ reported in score points. Strongest cells: GPT-4o in Spanish ($-20.61$) vs.\ English ($+11.84$), Gemini in Arabic ($+9.31$), GPT-5.4 in Swahili ($+8.39$) vs.\ English ($-7.44$). The simultaneous radius is about $10.0$ points at $\varepsilon=0.05$ under Proposition~\ref{prop:convergence}, so moderate cells should be interpreted cautiously (Section~\ref{sec:limitations}).}
\label{fig:interaction_heatmap}
\end{figure}

\subsection{Identifiability Without Human Labels}

Assume a complete balanced panel: every task $t \in \mathcal{T}$ is scored by every language-backbone pair $(\ell, b) \in \mathcal{L} \times \mathcal{B}$. Under the simplified model $\gamma(t, \ell) \approx 0$:
\begin{equation}
S(t, \ell, b) = \mu(t) + \alpha(b) + \beta(\ell, b) + \epsilon.
\label{eq:simplified}
\end{equation}

\begin{proposition}[Identifiability of the interaction matrix]
\label{thm:identifiability}
The standard two-way ANOVA interaction-identifiability result \citep{scheffe1959analysis,searle1971linear} specializes to our setting as follows. Assume $m \geq 2$, $k \geq 2$, a complete balanced panel, and $\mathbb{E}[\epsilon(t,\ell,b)] = 0$. Under the sum-to-zero normalization
$\sum_{\ell} \beta(\ell,b) = 0 \;\forall b$ and $\sum_b \beta(\ell,b) = 0 \;\forall \ell$,
the interaction matrix $\beta(\ell,b)$ is uniquely identifiable from the population cell means $M(\ell,b) \triangleq \frac{1}{n}\sum_t \mathbb{E}[S(t,\ell,b)]$ via double centering. The proof is in Appendix~\ref{app:proofs}.
\end{proposition}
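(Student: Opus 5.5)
The plan is to compute the population cell means explicitly under the simplified model (Eq.~\ref{eq:simplified}) and show that the double-centering operator applied to $M$ returns $\beta$ exactly. Since $\mathbb{E}[\epsilon]=0$ and the panel is complete and balanced, averaging over tasks gives
\begin{equation*}
M(\ell,b) = \bar\mu + \alpha(b) + \beta(\ell,b), \qquad \bar\mu \triangleq \tfrac{1}{n}\textstyle\sum_t \mu(t).
\end{equation*}
So $M$ is a $k\times m$ matrix that is the sum of a constant, a column effect depending only on $b$, and the interaction. We define the double-centered matrix
\begin{equation*}
\hat\beta(\ell,b) \triangleq M(\ell,b) - \bar M(\cdot,b) - \bar M(\ell,\cdot) + \bar M(\cdot,\cdot),
\end{equation*}
where the dots denote averages over $\ell$ (over $k$ languages) or over $b$ (over $m$ backbones).

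\textbf{Existence (the formula recovers $\beta$).} I will substitute the expression for $M$ into each of the three marginal means and use the sum-to-zero constraints. The row-average over $\ell$ gives $\bar M(\cdot,b)=\bar\mu+\alpha(b)$, because $\sum_\ell\beta(\ell,b)=0$. The column-average over $b$ gives $\bar M(\ell,\cdot)=\bar\mu+\bar\alpha$, because $\sum_b\beta(\ell,b)=0$. The grand mean is $\bar\mu+\bar\alpha$. Plugging these in, the constants and the $\alpha(b)$ terms cancel, leaving $\hat\beta(\ell,b)=\beta(\ell,b)$. I will also note that any term depending only on $\ell$, or only on $b$, is annihilated by the operator. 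This covers a framework offset $\phi(f)$ or a language shift $g(\ell)$, and explains the paper's remark that such effects cancel.

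\textbf{Uniqueness.} The real content of ``uniquely identifiable'' is that no two distinct parameter configurations satisfying the normalization produce the same $M$. I will show this directly. Suppose $(\bar\mu,\alpha,\beta)$ and $(\bar\mu',\alpha',\beta')$ yield the same $M$. Then $\beta-\beta' = c + a(b)$ for a constant $c$ and a function $a$ of $b$ alone. Both $\beta$ and $\beta'$ are doubly centered, so $\beta-\beta'$ is doubly centered too. Summing over $\ell$ gives $k(c+a(b))=0$ for every $b$, hence $\beta=\beta'$. Alternatively, applying the linear double-centering operator to both sides gives $\beta=\beta'$ immediately by the existence step, since that operator maps $M$ to $\beta$ for every admissible parameterization. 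Here $m,k\ge 2$ ensures the interaction space is nontrivial; it is not needed for the algebra itself.

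The main obstacle is not the computation but being precise about what is identified. The individual quantities $\mu(t)$ and $\alpha(b)$ are not separately identified: only $\bar\mu+\alpha(b)$ is. So I will state explicitly that the claim concerns $\beta$ alone, and that the sum-to-zero constraints are what pin $\beta$ down within the equivalence class $\beta + c + a(b) + h(\ell)$. Without the constraints, any additive row or column function could be shifted between $\beta$ and the main effects. I would close by remarking that this is exactly the Type~III sum-to-zero identification in \citet{scheffe1959analysis,searle1971linear}.
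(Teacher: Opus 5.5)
Your proposal is correct and follows the paper's proof: the paper also computes $M(\ell,b)=\bar\mu+\alpha(b)+\beta(\ell,b)$, evaluates the three marginal means using the two sum-to-zero constraints, and shows that the double-centered matrix equals $\beta$, and it argues uniqueness exactly as in your second (operator-based) argument. Your first uniqueness argument, which forces $\beta-\beta'=c+a(b)$ to vanish by summing over $\ell$, is a valid and slightly more explicit parameter-level version of the same point.
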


\paragraph{Scope of identifiability.} Under the sum-to-zero constraints, $\beta(\ell,b)$ is uniquely identified from the population cell means for \emph{observed} language-backbone cells; the absolute levels of $\mu(t)$ and $\alpha(b)$ are not separately identified, but this indeterminacy does not affect $\beta$. A three-way effect $\gamma(t,\ell,b)$ is outside the two-way model and cannot be separated from $\beta$ by CBC.

\section{Consensus-Based Calibration}
\label{sec:cbc}
CBC removes the language-backbone interaction term $\beta(\ell,b)$ when the operational goal is a shared evaluator ranking across languages. It does \emph{not} detect or remove a shift shared across all backbones in a language. For example, if every evaluator backbone systematically underscored Swahili by 5 points, that shared language-level shift would cancel under double centering and CBC would leave it unchanged. Correcting such a shared shift requires an external anchor, such as human labels or a trusted reference evaluator, which is outside the scope of the present label-free interaction-calibration setting.

\subsection{Algorithm}

Proposition~\ref{thm:identifiability} yields a constructive estimator:

\textbf{Step 1.} Compute $\bar{S}(\ell, b) = \frac{1}{n}\sum_t S(t, \ell, b)$.

\textbf{Step 2.} Estimate the interaction via double centering:
\begin{equation}
\hat\beta(\ell, b) = \bar{S}(\ell, b) - \bar{S}(\cdot, b) - \bar{S}(\ell, \cdot) + \bar{S}(\cdot, \cdot).
\label{eq:residual}
\end{equation}

\textbf{Step 3.} Calibrate: $\hat{S}(t, \ell, b) = S(t, \ell, b) - \hat\beta(\ell, b)$.

CBC requires \textit{zero human annotations}. It uses only multi-backbone evaluation data that labs already collect.

\subsection{Convergence and Consistency}

\begin{proposition}[Convergence of CBC]
\label{prop:convergence}
Assume Eq.~\ref{eq:simplified}, a complete balanced panel with $n$ tasks per language-backbone cell, and independent homoskedastic Gaussian noise $\epsilon(t,\ell,b) \sim \mathcal{N}(0,\sigma^2)$. Then for each fixed $(\ell,b)$ and every $x > 0$,
{\footnotesize
\begin{equation}
\Pr\!\left(|\hat\beta(\ell,b){-}\beta(\ell,b)| \geq x\right)
\leq
2\exp\!\left(
\frac{-n x^2}{2\sigma^2(1{-}\tfrac{1}{m})(1{-}\tfrac{1}{k})}
\right).
\end{equation}
}
Consequently, with probability at least $1-\varepsilon$,
{\footnotesize
\begin{equation}
\max_{\ell,b} |\hat\beta(\ell, b) - \beta(\ell, b)|
\leq
\sigma\sqrt{\frac{2(1{-}\tfrac{1}{m})(1{-}\tfrac{1}{k})\log(2mk/\varepsilon)}{n}}.
\end{equation}
}
\end{proposition}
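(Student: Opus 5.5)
The plan is to show that the estimation error is a fixed linear combination of the Gaussian noise. We then compute its variance exactly and finish with a Gaussian tail bound plus a union bound.

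\textbf{Step 1: the signal cancels.} Substitute Eq.~\ref{eq:simplified} into Eq.~\ref{eq:residual}. Write $\bar\mu = \frac1n\sum_t \mu(t)$ and $\bar\epsilon(\ell,b) = \frac1n\sum_t \epsilon(t,\ell,b)$, so that $\bar S(\ell,b) = \bar\mu + \alpha(b) + \beta(\ell,b) + \bar\epsilon(\ell,b)$. Under double centering:
\begin{itemize}
\item the constant $\bar\mu$ cancels;
\item $\alpha(b)$ cancels between $\bar S(\ell,b)$ and $\bar S(\cdot,b)$, and between $\bar S(\ell,\cdot)$ and $\bar S(\cdot,\cdot)$;
\item the sum-to-zero constraints of Proposition~\ref{thm:identifiability} leave exactly $\beta(\ell,b)$.
\end{itemize}
Hence
\[
\hat\beta(\ell,b)-\beta(\ell,b) = \bar\epsilon(\ell,b) - \bar\epsilon(\cdot,b) - \bar\epsilon(\ell,\cdot) + \bar\epsilon(\cdot,\cdot).
\]
This is the same algebra as in the proof of Proposition~\ref{thm:identifiability}, now applied to sample rather than population means.

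\textbf{Step 2: exact distribution.} The cell-mean noises $\bar\epsilon(\ell',b')$ are i.i.d.\ $\mathcal N(0,\sigma^2/n)$ across the $mk$ cells, because the $\epsilon$'s are independent. The error is therefore $\sum_{\ell',b'} c_{\ell'b'}\,\bar\epsilon(\ell',b')$, with coefficients
\[
c_{\ell'b'} = \Bigl(\mathbf 1[\ell'=\ell]-\tfrac1k\Bigr)\Bigl(\mathbf 1[b'=b]-\tfrac1m\Bigr).
\]
The coefficient matrix is an outer product, so the sum of squares factorizes:
\[
\sum c_{\ell'b'}^2 = \Bigl(\sum_{\ell'}(\mathbf 1[\ell'=\ell]-\tfrac1k)^2\Bigr)\Bigl(\sum_{b'}(\mathbf 1[b'=b]-\tfrac1m)^2\Bigr) = \Bigl(1-\tfrac1k\Bigr)\Bigl(1-\tfrac1m\Bigr),
\]
since each factor equals $(1-\tfrac1k)^2 + (k-1)/k^2 = 1-\tfrac1k$. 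The error is thus exactly $\mathcal N\bigl(0, \sigma^2(1-\tfrac1m)(1-\tfrac1k)/n\bigr)$.

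Getting this variance constant right is the only real obstacle. The main danger is double-counting the correlation between $\bar\epsilon(\ell,b)$ and the marginal averages, which contain it. The outer-product (projection) structure avoids that.

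\textbf{Step 3: tails and union bound.} The standard Gaussian tail $\Pr(|Z|\ge x)\le 2e^{-x^2/(2v)}$, applied with the variance $v$ from Step 2, gives the first display.

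For the uniform statement:
\begin{enumerate}
\item Take a union bound over all $mk$ cells; the estimates are correlated, but a union bound needs no independence.
\item Set the total failure probability to $2mk\exp(-nx^2/(2\sigma^2(1-\tfrac1m)(1-\tfrac1k))) = \varepsilon$.
\item Solving for $x$ gives the stated radius $\sigma\sqrt{2(1-\tfrac1m)(1-\tfrac1k)\log(2mk/\varepsilon)/n}$.
\end{enumerate}
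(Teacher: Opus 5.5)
Your proposal is correct and follows essentially the same route as the paper's proof: write the error as the double-centered cell-average noise $\bar\epsilon(\ell,b)-\bar\epsilon(\cdot,b)-\bar\epsilon(\ell,\cdot)+\bar\epsilon(\cdot,\cdot)$, compute its variance $\frac{\sigma^2}{n}(1-\tfrac{1}{m})(1-\tfrac{1}{k})$, apply the Gaussian tail bound, and take a union bound over the $mk$ cells. The only difference is in how you get the variance constant: you use the outer-product form $c_{\ell'b'}=(\mathbf{1}[\ell'=\ell]-\tfrac{1}{k})(\mathbf{1}[b'=b]-\tfrac{1}{m})$ to factor the sum of squares, whereas the paper sums the four coefficient types directly; the coefficients are identical and the result is the same.
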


The exact constant comes from the variance of the double-centering operator: $\mathrm{Var}[\hat\beta(\ell,b)-\beta(\ell,b)] = \frac{\sigma^2}{n}(1-\frac{1}{m})(1-\frac{1}{k})$. For the expanded panel used in our main experiments ($m{=}6$, $k{=}8$, $n{=}55$), the simultaneous high-probability radius is about $10.0$ points at $\varepsilon{=}0.05$ with $\hat\sigma=22.24$ estimated from the benchmark score dispersion (Appendix~\ref{app:ablations}). This model-dependent numerical radius is conditional on independent homoskedastic Gaussian noise; it is not a distribution-free guarantee and need not transfer to heteroskedastic or heavy-tailed judge noise. The weaker consistency result below does not require Gaussian tails.

\begin{proposition}[Consistency]
\label{prop:consistency}
Without the Gaussian assumption, if Eq.~\ref{eq:simplified} holds with independent tasks and $\mathbb{E}[|\epsilon(t,\ell,b)|] < \infty$, then $\hat\beta(\ell, b) \xrightarrow{p} \beta(\ell, b)$ as $n \to \infty$ for every $(\ell, b)$ by continuous mapping on the four sample marginal means (Appendix~\ref{app:proofs}). The $O(1/\sqrt{n})$ rate is the standard parametric rate for this sample-mean construction; we do not claim a minimax lower bound here.
\end{proposition}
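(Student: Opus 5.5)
We prove Proposition~\ref{prop:consistency} by first writing the estimation error $\hat\beta(\ell,b)-\beta(\ell,b)$ as a fixed linear combination of noise averages, and then applying the weak law of large numbers to each average. Substituting Eq.~\ref{eq:simplified} into the four marginal means gives
\[
\bar S(\ell,b)=\bar\mu+\alpha(b)+\beta(\ell,b)+\bar\epsilon(\ell,b),\qquad \bar\epsilon(\ell,b)\triangleq\tfrac1n\sum_t\epsilon(t,\ell,b),
\]
with $\bar\mu=\frac1n\sum_t\mu(t)$. The marginals $\bar S(\cdot,b)$, $\bar S(\ell,\cdot)$ and $\bar S(\cdot,\cdot)$ are the corresponding averages over $\ell$, over $b$, and over both.

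\textbf{Step 1 (the deterministic terms cancel).} We insert these expressions into Eq.~\ref{eq:residual}. The term $\bar\mu$ appears with signs $+,-,-,+$ and so cancels. The term $\alpha(b)$ cancels between the first two summands, and the average of $\alpha$ over backbones cancels between the last two. By the sum-to-zero constraints of Proposition~\ref{thm:identifiability}, all row, column and grand averages of $\beta$ vanish, so only $\beta(\ell,b)$ survives. This is the same computation as in the identifiability proof, and it gives the exact identity
\[
\hat\beta(\ell,b)-\beta(\ell,b)=\bar\epsilon(\ell,b)-\bar\epsilon(\cdot,b)-\bar\epsilon(\ell,\cdot)+\bar\epsilon(\cdot,\cdot),
\]
where the dots denote averages of $\bar\epsilon$ over the corresponding index.

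\textbf{Step 2 (law of large numbers).} Fix a cell $(\ell',b')$. We assume that the noise terms $\epsilon(t,\ell',b')$, $t=1,\dots,n$, are independent across tasks, identically distributed, and integrable with mean zero. Then Khinchin's weak law of large numbers gives $\bar\epsilon(\ell',b')\xrightarrow{p}0$. We expect the main obstacle here: the hypothesis ``independent tasks and $\mathbb{E}|\epsilon|<\infty$'' does not obviously include identical distribution. If the noise is only independent and not identically distributed, first-moment control alone does not suffice for the WLLN. In that case we would assume uniform integrability of $\{\epsilon(t,\ell',b')\}_t$ and use the truncation form of the WLLN, or fall back on a uniform second-moment bound and Chebyshev's inequality. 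We would state the i.i.d.-across-tasks reading explicitly, since that is the natural one for a fixed cell.

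\textbf{Step 3 (continuous mapping).} There are only $mk$ cells, a number that does not grow with $n$. The vector $(\bar\epsilon(\ell',b'))_{\ell',b'}$ therefore converges in probability to $0$ jointly, because joint convergence in probability follows from coordinatewise convergence in finite dimension. The right-hand side of the identity in Step~1 is a fixed linear map of this vector with coefficients depending only on $m$ and $k$, hence continuous. By the continuous mapping theorem, $\hat\beta(\ell,b)-\beta(\ell,b)\xrightarrow{p}0$ for every $(\ell,b)$. Equivalently, each of the four sample marginal means converges in probability to its population counterpart, and $\hat\beta$ is a continuous function of those four means.

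\textbf{Remark on the rate.} The paper's claim of an $O(1/\sqrt n)$ rate needs finite variance. Under that extra assumption, the identity in Step~1 gives the variance formula $\frac{\sigma^2}{n}(1-\frac1m)(1-\frac1k)$, and Chebyshev's inequality then yields the rate. No lower bound is claimed.
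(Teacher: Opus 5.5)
Your proof is correct and is essentially the paper's argument: a law of large numbers for each of the $mk$ cell averages, followed by the observation that $\hat\beta(\ell,b)$ is a fixed linear map of those averages whose population value equals $\beta(\ell,b)$ by the identifiability computation. Your ordering is slightly cleaner than the paper's ``$\bar S(\ell,b)\to M(\ell,b)$'': you cancel $\bar\mu$, $\alpha$, and the $\beta$ marginals exactly, then apply the WLLN to $\bar\epsilon(\ell',b')$, whereas the paper applies it to observations $S(t,\ell,b)$ that are not identically distributed across tasks, toward a target $M(\ell,b)$ that depends on $n$. Your i.i.d.-or-uniform-integrability caveat also correctly names what the stated hypothesis $\mathbb{E}|\epsilon|<\infty$ leaves implicit, and the exact constant in your rate remark additionally needs independence across cells and a common $\sigma^2$.
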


\subsection{Model Misspecification}

\begin{proposition}[Robustness to Task-Language Effects]
\label{prop:misspec}
Under the full model (Eq.~\ref{eq:full}), the CBC estimator $\hat\beta(\ell, b)$ is \textit{unbiased} for $\beta(\ell, b)$: the task-language interaction $\gamma(t, \ell)$ cancels exactly in the double-centering operation (Eq.~\ref{eq:residual}), because $\gamma$ does not depend on $b$.
\end{proposition}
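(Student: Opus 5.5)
We plan to substitute the full model (Eq.~\ref{eq:full}) into the four sample means in Eq.~\ref{eq:residual} and track each additive component separately. Double centering is a linear operator, so we apply it term by term. We use the sum-to-zero constraints on $\beta$ from Proposition~\ref{thm:identifiability}, and the assumption that $\epsilon$ has mean zero.

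First we compute the cell mean
\[
\bar S(\ell,b)=\bar\mu+\alpha(b)+\beta(\ell,b)+\bar\gamma(\ell)+\bar\epsilon(\ell,b),
\]
where $\bar\mu=\frac1n\sum_t\mu(t)$, $\bar\gamma(\ell)=\frac1n\sum_t\gamma(t,\ell)=g(\ell)$, and $\bar\epsilon(\ell,b)$ is the task-averaged noise. Averaging over $\ell$ gives the column margin
\[
\bar S(\cdot,b)=\bar\mu+\alpha(b)+\tfrac1k\sum_\ell g(\ell)+\bar\epsilon(\cdot,b),
\]
since $\sum_\ell\beta(\ell,b)=0$. Averaging over $b$ gives the row margin
\[
\bar S(\ell,\cdot)=\bar\mu+\bar\alpha+g(\ell)+\bar\epsilon(\ell,\cdot),
\]
since $\sum_b\beta(\ell,b)=0$. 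The grand mean is
\[
\bar S(\cdot,\cdot)=\bar\mu+\bar\alpha+\tfrac1k\sum_\ell g(\ell)+\bar\epsilon(\cdot,\cdot).
\]

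Next we form the alternating sum $\bar S(\ell,b)-\bar S(\cdot,b)-\bar S(\ell,\cdot)+\bar S(\cdot,\cdot)$ and check cancellation component by component:
\begin{itemize}
\item $\bar\mu$ appears with coefficients $+1-1-1+1=0$.
\item $\alpha(b)$ appears in the cell and column terms with opposite signs, and $\bar\alpha$ appears in the row and grand terms with opposite signs.
\item $g(\ell)$ appears in the cell and row terms with opposite signs, and $\frac1k\sum_\ell g(\ell)$ appears in the column and grand terms with opposite signs.
\end{itemize}
This is the key point: the $\gamma$ contribution depends only on $\ell$, not on $b$, so centering over $b$ annihilates it exactly, for arbitrary $\gamma(t,\ell)$ and with no distributional assumption on it. Only $\beta(\ell,b)$ survives among the structural terms. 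The result is
\[
\hat\beta(\ell,b)=\beta(\ell,b)+\bar\epsilon(\ell,b)-\bar\epsilon(\cdot,b)-\bar\epsilon(\ell,\cdot)+\bar\epsilon(\cdot,\cdot).
\]

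Finally we take expectations. Each noise average has mean zero, so $\mathbb{E}[\hat\beta(\ell,b)]=\beta(\ell,b)$, and unbiasedness follows. This holds whether the task effects are treated as fixed or random, provided $\epsilon$ has conditional mean zero given them.

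The only delicate step is being explicit about what is assumed. Unbiasedness needs the sum-to-zero normalization of $\beta$, which is inherited from Proposition~\ref{thm:identifiability}; without it $\hat\beta$ recovers only the double-centered version of $\beta$. It also needs a complete balanced panel, so that the same $n$ tasks enter every cell. With unbalanced task sets, $\bar\gamma$ would depend on $b$ through which tasks each backbone saw, and the cancellation would fail. It is also worth remarking that a three-way term $\gamma(t,\ell,b)$ would not cancel; this matches the scope caveat stated after Proposition~\ref{thm:identifiability}.

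Since the noise part is identical to the one in Eq.~\ref{eq:simplified}, we also expect to note that the variance, and hence the bound in Proposition~\ref{prop:convergence}, carries over unchanged under the full model.
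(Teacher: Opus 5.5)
Your proof is correct and follows the paper's own proof: you substitute the full model into the four means and note that $g(\ell)=\frac1n\sum_t\gamma(t,\ell)$ enters the cell and row means with opposite signs, while its average enters the column and grand means with opposite signs, so only $\beta(\ell,b)$ plus mean-zero double-centered noise survives; you are also more explicit than the paper that the sum-to-zero normalization of $\beta$ and the balanced panel are needed. Your closing aside is sound too: the cancellation of $\gamma$ is exact for every realization, so under the same i.i.d.\ Gaussian $\epsilon$ the bound of Proposition~\ref{prop:convergence} is literally unchanged, which is firmer than the paper's remark that those error bars ``need not carry over'', a caution that only applies to a plug-in $\hat\sigma$ absorbing $\gamma$-variation or to a different noise structure.
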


\begin{proof}
Define $g(\ell) = \frac{1}{n}\sum_t \gamma(t, \ell)$. Under Eq.~\ref{eq:full}, $\bar{S}(\ell, b) = \bar\mu + \alpha(b) + \beta(\ell, b) + g(\ell) + \bar\epsilon$. In the double-centering operator of Eq.~\ref{eq:residual}, $g(\ell)$ appears in $\bar{S}(\ell, b)$ and $\bar{S}(\ell, \cdot)$ with coefficients $+1$ and $-1$, and $\bar{g}$ appears in $\bar{S}(\cdot, b)$ and $\bar{S}(\cdot, \cdot)$ with coefficients $-1$ and $+1$. All $\gamma$-derived terms cancel, leaving $\hat\beta(\ell, b) = \beta(\ell, b)$ plus mean-zero noise.
\end{proof}

\begin{remark}
This robustness result is what separates CBC's formal contribution from a direct application of two-way ANOVA, whose standard reference inference assumes the model is correctly specified. CBC handles task-language interactions without bias in the point estimate, though the simplified-model error bars of Proposition~\ref{prop:convergence} need not carry over unchanged under the misspecified model. Proposition~\ref{prop:misspec} does not cover a genuine \emph{three-way} interaction $\gamma(t, \ell, b)$, which can arise from backbone-specific task adaptation and lies outside the two-way additive model. Under such an effect, double centering may absorb real performance heterogeneity as well as evaluation interaction, so additional structure would be needed to separate the two.
\end{remark}

\section{Experiments}
\label{sec:experiments}
\subsection{Data}

We build on the previously introduced five-language multilingual Agent-as-a-Judge setting \citep{mahmood2026multilingualpromptlocalizationagentasajudge}. The present paper adds Japanese, Spanish, and Swahili, yielding an expanded internal benchmark with 7{,}920 runs across 6 backbones, 8 languages, 3 judge frameworks (MetaGPT, GPT-Pilot, and OpenHands), and 55 DevAI tasks from one agentic software-engineering benchmark family. The six internal evaluator identifiers are \texttt{gpt-4o}, \texttt{gpt-5.4}, \texttt{claude-sonnet-4.6}, \texttt{gemini-3-flash-preview}, \texttt{deepseek-v3.2}, and \texttt{qwen3.5-9b}, displayed in the paper as GPT-4o, GPT-5.4, Sonnet, Gemini, DeepSeek, and Qwen. Because provider billing exports were not preserved consistently, we report estimated list-price API cost rather than billed totals: using recorded token counts from the saved run artifacts, the added three-language extension is estimated at \$54.93 (from 25.68M input plus 6.64M output tokens times contemporaneous public list prices; Appendix~\ref{app:costs}).

\paragraph{Statistical methodology.} Because CBC estimates a language-backbone interaction term $\beta(\ell, b)$, our main evaluation keeps all eight languages in the observed set and uses task-level bootstrap resampling: in each of 1{,}000 replicates, we sample tasks with replacement to form the training set, estimate $\hat\beta$ from all eight observed languages on those sampled tasks, and evaluate calibrated rankings on the out-of-bag tasks from the same language set. Our primary metric is mean pairwise Kendall rank correlation $\tau$ across all 28 language pairs, computed from the backbone rankings on the held-out tasks. We additionally report an exploratory leave-one-language-out (LOLO) extrapolation diagnostic: for each held-out language $\ell_h$, we estimate $\hat\beta$ on the remaining seven languages using the sampled training tasks, calibrate those seven languages, and compare the held-out ranking on out-of-bag tasks against each retained language via mean Kendall $\tau$. For the held-out language we consider two simple heuristics, zero-shot $\hat\beta(\ell_h,b)=0$ and backbone-wise mean imputation. Under CBC's sum-to-zero normalization, however, the backbone-wise mean of $\hat\beta(\cdot,b)$ over the retained languages is exactly zero, so the two LOLO heuristics coincide up to numerical noise.

\subsection{Verifying Theoretical Conditions}

\paragraph{Rank reversal.}
Using framework-averaged task scores, 7 of 15 backbone pairs exhibit pairwise rank reversal ($\delta > 0$ in Definition~\ref{def:diversity}); see Table~\ref{tab:diversity}. The raw witness gaps are in score points: for GPT-4o vs.\ GPT-5.4, English favors GPT-4o ($d = +35.63$) and Spanish favors GPT-5.4 ($d = -11.25$), giving $\delta = 400.79~\mathrm{points}^2$ from the unrounded means. We use the intersection-union test over the two witness languages and apply Benjamini--Hochberg correction at FDR $=0.05$ to the 15 pairwise tests; all 7 observed reversal pairs remain significant after correction. The centered interaction cells $\hat\beta$ shown in Figure~\ref{fig:interaction_heatmap} are a different quantity from these raw pairwise gaps. The empirical top-ranked backbone alternates between GPT-4o (English, Chinese, Turkish) and Gemini (Arabic, Hindi, Japanese, Spanish, Swahili), so no universal winner exists in the observed data.

\begin{table}[t]
\centering\footnotesize
\renewcommand{\arraystretch}{1.15}
\setlength{\tabcolsep}{4pt}
\caption{Rank-reversal strength ($\delta$ in $\mathrm{points}^2$) for backbone pairs on the expanded eight-language benchmark. Each test uses one-sided task-level tests in the two witness languages (the language pair attaining the most negative product $d_{ij}(\ell_a)\cdot d_{ij}(\ell_b)$), combined by the intersection-union rule; adj.\ $p$ uses Benjamini--Hochberg over all 15 pairs. Witness language pairs are listed in Appendix~\ref{app:proofs}.}
\label{tab:diversity}
\begin{tabular}{@{}lrr@{}}
\toprule
\textbf{Pair} & \textbf{$\delta$ ($\mathrm{points}^2$)} & \textbf{Adj. $p$} \\
\midrule
GPT-4o vs.\ GPT-5.4    & 400.8 & $9.7{\times}10^{-8}$ \\
GPT-4o vs.\ Sonnet     & 221.1 & 0.0019 \\
GPT-4o vs.\ Gemini     & 312.0 & 0.0019 \\
GPT-4o vs.\ DeepSeek   & 140.7 & 0.0031 \\
GPT-4o vs.\ Qwen       & 0.0   & 1.000 \\
GPT-5.4 vs.\ Sonnet    & 95.9  & $7.5{\times}10^{-6}$ \\
GPT-5.4 vs.\ Gemini    & 0.0   & 1.000 \\
GPT-5.4 vs.\ DeepSeek  & 71.8  & 0.0119 \\
GPT-5.4 vs.\ Qwen      & 0.0   & 1.000 \\
Sonnet vs.\ Gemini     & 0.0   & 1.000 \\
Sonnet vs.\ DeepSeek   & 38.1  & 0.0246 \\
Sonnet vs.\ Qwen       & 0.0   & 1.000 \\
Gemini vs.\ DeepSeek   & 0.0   & 1.000 \\
Gemini vs.\ Qwen       & 0.0   & 1.000 \\
DeepSeek vs.\ Qwen     & 0.0   & 1.000 \\
\bottomrule
\end{tabular}
\end{table}

\paragraph{Simplified model fit.}
We fit Eq.~\ref{eq:simplified} and Eq.~\ref{eq:full} by OLS on all 7{,}920 framework-level observations with framework fixed effects. The simplified model achieves $R^2 = 0.691$; the full model with task-language interactions reaches $R^2 = 0.705$. The simplified two-way structure captures a substantial majority of the variance; task-language effects contribute only $\Delta R^2 \approx 0.015$.

\subsection{Calibration Results}

Our primary evaluation estimates out-of-bag cross-task stability within the observed eight-language panel. Using the full observed language set and splitting over tasks, raw scores achieve mean held-out cross-task rank consistency $\tau = 0.650$, while CBC reaches $\tau = 0.902$. After full-fit calibration, all eight languages align on the same backbone order (Figure~\ref{fig:cbc_before_after}); this $\tau=1.000$ ordering is an in-sample sanity check, not an independent generalization estimate or a human-grounded correctness measure. Separately, the exploratory LOLO diagnostic rises from $\tau = 0.649$ under Raw to $\tau = 0.740$ under the LOLO zero-shot CBC heuristic (best: Arabic $0.750 \to 0.898$; hardest: Spanish $0.411 \to 0.504$). Mean imputation coincides with zero-shot here under CBC's sum-to-zero normalization, so this diagnostic does not establish zero-shot transfer to unseen languages. We compare against post-hoc baselines that match score adjustment on a continuous matrix: quantile normalization \citep{bolstad2003comparison} and ComBat-EB batch correction \citep{johnson2007combat}, alongside simpler controls (Table~\ref{tab:calibration}). Per-language normalization, z-score, quantile normalization, ensemble, backbone-only normalization, and random control are diagnostic controls for generic location/scale alignment or aggregation; ComBat-EB is the strongest substantive continuous post-hoc comparator compatible with this matrix. The oracle is an unattainable upper bound using the held-out-task interaction.

\begin{table}[t]
\centering\footnotesize
\renewcommand{\arraystretch}{1.15}
\setlength{\tabcolsep}{3pt}
\caption{Calibration on the observed eight-language benchmark (higher is better). In each bootstrap replicate, methods are fit on sampled training tasks and evaluated on out-of-bag held-out tasks. ``Full-fit$^\ast$'' is an in-sample diagnostic on all 55 tasks. The oracle subtracts $\beta$ on the held-out tasks themselves and is an unattainable additive-model upper bound. $^\ast$ denotes an in-sample sanity check, not an independent generalization estimate.}
\label{tab:calibration}
\begin{tabular}{@{}lccc@{}}
\toprule
\textbf{Method} & \textbf{$\tau$ $\uparrow$} & \textbf{95\% CI} & \textbf{Full-fit$^\ast$} \\
\midrule
Raw (no calib.)            & 0.650 & [0.610, 0.714] & 0.629 \\
Per-language norm.         & 0.650 & [0.610, 0.714] & 0.629 \\
ComBat-EB                  & 0.650 & [0.610, 0.714] & 0.629 \\
Z-score                    & 0.220 & [-0.038, 0.586] & -0.086 \\
Quantile norm.             & -0.097 & [-0.119, -0.068] & -0.118 \\
Ensemble                   & 0.013 & [-0.086, 0.186] & -0.081 \\
Backbone-only norm.        & 0.060 & [-0.081, 0.295] & -0.086 \\
Random control             & 0.002 & [-0.095, 0.148] & -0.067 \\
\textbf{CBC}               & \textbf{0.902} & \textbf{[0.795, 0.968]} & \textbf{1.000$^\ast$} \\
\midrule
Oracle (eval-task $\beta$) & 1.000 & --- & 1.000$^\ast$ \\
\bottomrule
\end{tabular}
\end{table}

Among the substantive continuous post-hoc comparisons, CBC significantly outperforms ComBat-EB: the paired-bootstrap difference $\tau_{\text{CBC}}-\tau_{\text{ComBat}}$ is positive in all 1{,}000 replicates ($p < 0.002$). The diagnostic controls are also informative. ComBat-EB matches Raw because, under CBC's sum-to-zero normalization, the average interaction over backbones for each language is zero, so a language-wide offset has nothing to remove. Quantile normalization is worse than Raw because aligning per-backbone score distributions across languages destroys the cross-language ranking signal CBC is designed to estimate (implementation in Appendix~\ref{app:quantile_note}); the ensemble, z-score, backbone-only, and random controls likewise do not target the interaction directly. A Dawid--Skene EM diagnostic \citep{dawid1979maximum} produced an unstable bootstrap interval (Appendix~\ref{app:ds_em_note}). Pairwise-comparison methods such as CalibraEval \citep{li2025calibraeval} and judge-aware BTL \citep{xu2026judgeaware} target a different input regime; we treat the adapted judge-aware BTL model as the substantive external pairwise comparator in Section~\ref{sec:expanded}. Supervised calibration, criterion-annotation, and position-bias methods require human labels, criterion annotations, or position-bias calibration data that are unavailable in our label-free continuous-score setting, so they are not applicable comparators here.

\begin{figure}[t]
\centering
\includegraphics[width=0.98\columnwidth]{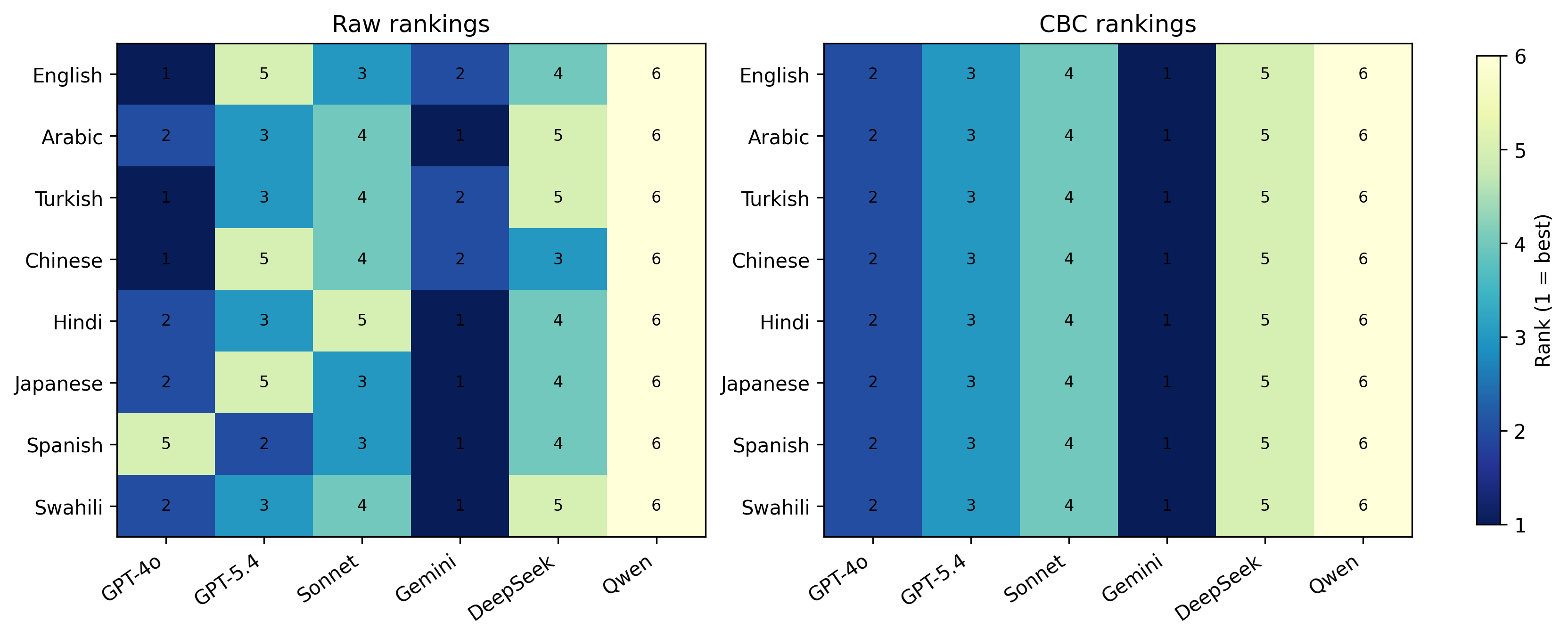}
\caption{Backbone ranks by language before (left) and after (right) CBC on the expanded eight-language benchmark. Raw rankings vary substantially across languages, whereas CBC aligns all eight languages to the same backbone order: Gemini $>$ GPT-4o $>$ GPT-5.4 $>$ Sonnet $>$ DeepSeek $>$ Qwen ($\tau=1.000$). The shared post-CBC order is a full-fit in-sample sanity check, not an independent generalization estimate.}
\label{fig:cbc_before_after}
\end{figure}

\subsection{Decision-Level Backbone Selection}

Rank consistency is useful only if it changes actual choices. To test that directly, we convert each bootstrap replicate into a per-language deployment decision. For each language, we use the training split to choose the top-ranked backbone under Raw or CBC, then evaluate that choice on the held-out tasks against the additive-model oracle winner obtained by subtracting eval-task $\beta$ on the held-out tasks themselves. This uses the same unattainable oracle as Table~\ref{tab:calibration}, but now asks a discrete question: did the method pick the backbone that the held-out calibrated scores would have preferred?

\begin{table}[t]
\centering\footnotesize
\renewcommand{\arraystretch}{1.2}
\caption{Decision-level backbone selection on the observed eight-language benchmark. Each bootstrap replicate produces 8 language-specific deployment decisions. Selection accuracy is reported with exact binomial 95\% confidence intervals over the resulting 8{,}000 language-decisions; regret is the held-out additive-model oracle-score gap between the selected backbone and the held-out oracle winner, with percentile intervals over bootstrap replicates.}
\label{tab:decision_selection}
\begin{tabular}{@{}lcc@{}}
\toprule
\textbf{Method} & \textbf{Oracle selection acc. $\uparrow$} & \textbf{Regret $\downarrow$} \\
\midrule
Raw  & 68.5\% [67.4, 69.5] & 3.23 [0.90, 5.82] \\
CBC                  & 100.0\% [99.95, 100.0] & 0.00 [0.00, 0.00] \\
\bottomrule
\end{tabular}
\end{table}

The result is sharper than the Kendall-$\tau$ view alone (Table~\ref{tab:decision_selection}). Raw rankings agree with the held-out additive-model oracle winner for only 68.5\% of language-decisions, whereas CBC agrees in all observed decisions. The raw errors are concentrated in English, Chinese, and Turkish, where Raw frequently selects GPT-4o while the held-out calibrated winner is Gemini. This is agreement with a model-based reference, not human-grounded correctness; within the stated deployment objective, it shows that CBC changes which evaluator backbone a practitioner would actually deploy.

\subsection{External Validation on M-RewardBench}
\label{sec:expanded}

\paragraph{Scaled validation panel.} As an external validation setting beyond the internal benchmark, we examined M-RewardBench \citep{gureja-etal-2025-rewardbench}, which provides aligned multilingual preference instances across 23 languages. The public release does not ship evaluator-by-language score matrices, so we built a collection pipeline that queries five provider-qualified evaluators---\path{openrouter/anthropic/claude-sonnet-4.6}, \path{deepseek/deepseek-v3.2}, \path{openrouter/google/gemini-3-flash-preview}, \path{openrouter/openai/gpt-4o-2024-08-06}, and \path{openrouter/openai/gpt-5.4}---with a fixed 1--5 pointwise rubric on chosen/rejected responses, over 7 overlapping languages (English, Arabic, Turkish, Simplified Chinese, Hindi, Japanese, Spanish) and 1{,}500 aligned items per language, or 10{,}500 language-item instances in total. We display these evaluators as Sonnet, DeepSeek, Gemini, GPT-4o, and GPT-5.4. Only two evaluator families are shared with the internal panel at the reported configuration level; because the provider routes, prompts, task sources, and language panels differ, no cross-panel consistency should be inferred. The estimated list-price API cost is \$195.3 (Appendix~\ref{app:costs}). Pipeline details and the adapted Bradley--Terry--Luce \citep{xu2026judgeaware} pairwise baseline are in Appendix~\ref{app:external_details}.

\paragraph{Calibration results.} Under the same bootstrap train/OOB protocol as the main benchmark, raw cross-language $\tau$ is moderate ($0.430$, 95\% CI $[0.371, 0.486]$); CBC raises it to $0.900$ ($[0.790, 1.000]$); the adapted judge-aware BTL pairwise baseline, our substantive comparator for the external pairwise regime, reaches only $0.405$ ($[0.352, 0.467]$), close to Raw and well below CBC (Table~\ref{tab:external_validation}). After full-fit calibration, all seven languages align on the order DeepSeek $>$ Sonnet $>$ GPT-4o $>$ GPT-5.4 $>$ Gemini with $\tau=1.000$ (Figure~\ref{fig:external_cbc_before_after}); this common full-fit order is an in-sample sanity check, not an independent generalization estimate.

\paragraph{Human-anchor check.} To add an external signal independent of our evaluator scores, we use the public human-judged chosen/rejected gold preferences in M-RewardBench itself. For each language we sample 100 items stratified by subset (54 \texttt{alpacaeval-easy}, 46 \texttt{alpacaeval-hard}), average the 5 evaluator margins per item, and ask whether the panel decision agrees with the gold preference (ties count as non-agreement). Raw agreement is 68.7\% (95\% CI $[65.4, 71.7]$); CBC-aligned panel agreement rises to 76.6\% ($[73.6, 79.3]$), a gain of 7.9 points ($[6.0, 9.9]$; Table~\ref{tab:external_human_anchor}). Because the gold labels are human-judged and released by an independent group \citep{gureja-etal-2025-rewardbench}, this human-anchor gain is our strongest external evidence that the correction is useful for a downstream target. It is supportive aggregate panel-level evidence, not a per-language diagnosis of whether an individual interaction cell reflects evaluative bias or genuine language-specific evaluator specialization; it also does not establish that every component removed by CBC is undesirable or that the calibrated scores are universally correct.

The two validation panels are materially different: the internal panel uses synthetic software-engineering tasks, three judge frameworks, and the six internal identifiers above, whereas the external panel uses public preference instances, a different pointwise rubric, seven overlapping languages, and provider-qualified evaluator routes. Together they reproduce language-conditioned rank instability; CBC improves cross-language consistency in both panels, and its human-anchor gain improves agreement with M-RewardBench gold preferences. Neither panel establishes universality across domains, languages, or evaluator families.

\begin{table}[t]
\centering\footnotesize
\renewcommand{\arraystretch}{1.15}
\setlength{\tabcolsep}{4pt}
\caption{Scaled validation on self-collected evaluator scores over public 7-language, 5-evaluator M-RewardBench instances, with the same bootstrap train/OOB protocol as the main benchmark. ``BTL$^\dagger$'' is an adapted Xu et al.\ \citeyearpar{xu2026judgeaware} pairwise-only baseline. ``Full-fit$^\ast$'' is an in-sample diagnostic on 1{,}500 items per language (10{,}500 language-item instances total). $^\ast$ denotes an in-sample sanity check, not an independent generalization estimate.}
\label{tab:external_validation}
\begin{tabular}{@{}lccc@{}}
\toprule
\textbf{Method} & \textbf{$\tau$ $\uparrow$} & \textbf{95\% CI} & \textbf{Full-fit$^\ast$} \\
\midrule
Raw (no calib.) & 0.430 & [0.371, 0.486] & 0.410 \\
Judge-aware BTL$^\dagger$ & 0.405 & [0.352, 0.467] & 0.410 \\
\textbf{CBC} & \textbf{0.900} & \textbf{[0.790, 1.000]} & \textbf{1.000$^\ast$} \\
\bottomrule
\end{tabular}
\end{table}

\begin{table}[t]
\centering\footnotesize
\renewcommand{\arraystretch}{1.15}
\setlength{\tabcolsep}{4pt}
\caption{Human-anchor validation on the M-RewardBench panel containing 1{,}500 items per language (10{,}500 language-item instances total). For each of the 7 languages we sample 100 items stratified by subset and compare the sign of the evaluator-panel mean margin against the public gold preference (higher is better). Ties count as non-agreement.}
\label{tab:external_human_anchor}
\begin{tabular}{@{}lcc@{}}
\toprule
\textbf{Method} & \textbf{Gold agree.\ $\uparrow$} & \textbf{95\% CI} \\
\midrule
Raw panel mean margin & 68.7\% & [65.4, 71.7] \\
\textbf{CBC-aligned} & \textbf{76.6\%} & \textbf{[73.6, 79.3]} \\
\bottomrule
\end{tabular}
\end{table}

\begin{figure}[t]
\centering
\includegraphics[width=0.98\columnwidth]{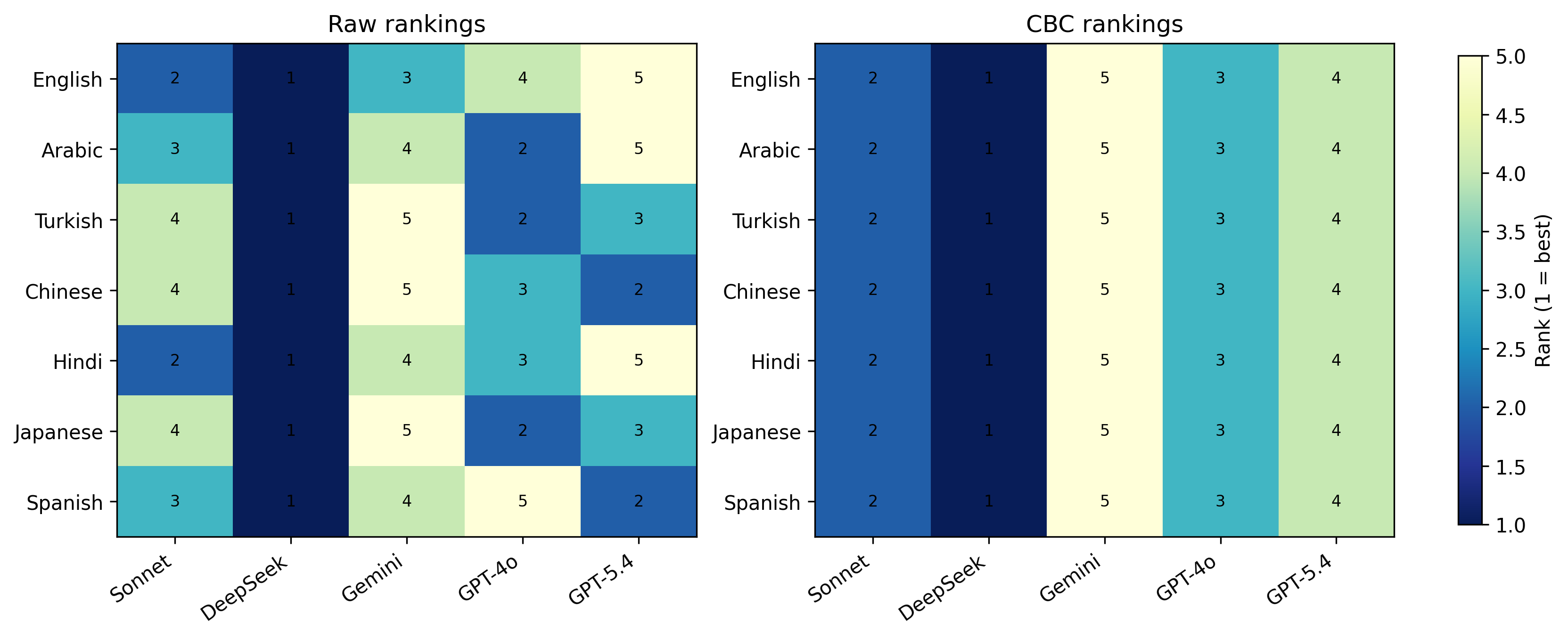}
\caption{Evaluator ranks before (left) and after (right) CBC on the self-collected 7-language, 5-evaluator score panel built from public M-RewardBench instances. Raw rankings vary across languages, while CBC aligns all seven languages to the same order: DeepSeek $>$ Sonnet $>$ GPT-4o $>$ GPT-5.4 $>$ Gemini ($\tau=1.000$). The shared post-CBC order is a full-fit in-sample sanity check, not an independent generalization estimate.}
\label{fig:external_cbc_before_after}
\end{figure}

\subsection{Ablations}
\label{sec:ablations-summary}

The $0.906$ value at $n{=}55$ comes from an independently seeded 100-replicate task-ablation loop, whereas the $0.902$ headline uses the primary 1{,}000-replicate bootstrap/OOB loop; the small difference is Monte Carlo variation, not a different benchmark configuration.

Three ablations on the same expanded benchmark, with tables, a figure, and full discussion, are reported in Appendix~\ref{app:ablations}. (i) Varying the number of backbones $m \in \{2,3,4,5,6\}$: mean $\tau$ stays near $0.90$ for all $m$, but cross-subset variability drops sharply once $m \geq 3$ (Std.\ $0.206 \to 0.105$). (ii) Varying the number of tasks $n \in \{10,20,30,40,55\}$: mean $\tau$ rises from $0.759$ at $n{=}10$ to $0.906$ at $n{=}55$, and the mean absolute estimation error $|\hat\beta-\beta_{\text{oracle}}|$ drops from $2.01$ to $0.67$, consistent with the $O(1/\sqrt{n})$ rate. For practitioners with $m \geq 3$ backbones we recommend $n \geq 30$ as a stable minimum and $n \approx 52$ as a more conservative target. (iii) Splitting by requirement type, CBC improves operational checks from $\tau = 0.770$ to $0.789$ and semantic checks from $0.737$ to $0.840$; the larger semantic gain is consistent with semantic judgments being more language-sensitive, while operational checks were already relatively stable across languages.

\section{Analysis}
\label{sec:analysis}

To corroborate the rank-based results with an information-theoretic quantity, we estimate $I(\text{Score}; \text{Language} \mid \text{Backbone}, \text{Task})$ from the framework-level scores. Each $(\text{task}, \text{backbone})$ slice contains 24 observations (8 languages $\times$ 3 frameworks); we apply a KSG-style nearest-neighbor estimator \citep{kraskov2004estimating} per slice, average over all 330 slices, and report a permutation-debiased value (500 shuffles per slice). The conditional MI is $0.178$ nats (permutation interval $[0.169, 0.187]$, one-sided $p < 0.002$): small in absolute terms but statistically reliable, and exactly the non-redundant signal that $\beta(\ell,b)$ represents. The seven verified pairwise reversals (Table~\ref{tab:diversity}) are exactly the cross-family pairs among GPT-4o, GPT-5.4, Sonnet, and DeepSeek; Qwen is Pareto-dominated in this panel and Gemini is top- or second-ranked in every language, so neither produces a sign change. A rigorous regression against standardized model-card metadata requires information that current cards do not consistently disclose and remains future work.

\section{Discussion}
\label{sec:discussion}

\paragraph{Theory meets practice.} The pairwise rank reversals tell practitioners that searching for a language-neutral backbone is unreliable; Proposition~\ref{thm:identifiability} shows they already have the data needed to estimate the interaction; and CBC tells them how to remove it when language-invariant ranking is the deployment objective. Unlike Dawid--Skene \citep{dawid1979maximum}, which models annotator-level reliability on categorical labels, CBC operates on continuous pointwise scores and exploits the two-way (language$\times$backbone) structure rather than treating each language-backbone pair as an independent annotator. This structural assumption replaces per-annotator confusion matrices with a single scalar interaction term per language-backbone pair, which is what enables stable estimation with $n=55$ tasks.

\paragraph{Cost and applicability.} CBC requires zero human annotations; by comparison, \citet{hada-etal-2024-large} use 20{,}000 native-speaker annotations across eight languages. Our compute cost is \$54.93 for the three-language extension and \$195.3 for the M-RewardBench panel (Appendix~\ref{app:costs}). The same two-way decomposition and calibrator apply beyond agentic code evaluation, to any setting where multiple judge backbones produce continuous scores on shared items across languages, including RLHF reward modeling, safety classification, automated grading, and machine-translation evaluation.

\section{Conclusion}
\label{sec:conclusion}

Multilingual LLM-judge rankings reverse across prompt languages, and the resulting language-backbone interaction is recoverable without human labels by double-centering the multi-evaluator score matrix. On the eight-language Agent-as-a-Judge benchmark, CBC raises held-out cross-task rank consistency $\tau$ from $0.650$ to $0.902$ and agrees with the held-out additive-model oracle in 100\% of per-language decisions versus 68.5\% for raw scores. On a separately collected M-RewardBench panel, $\tau$ rises from $0.430$ to $0.900$, while agreement with the public human gold preferences rises from 68.7\% to 76.6\% (+7.9 percentage points, 95\% CI $[6.0, 9.9]$). These internal gains are consistency and model-reference diagnostics rather than objective correctness; the human-anchor gain is supportive aggregate evidence and our strongest external evidence of downstream usefulness, not a complete per-language diagnosis of bias versus genuine evaluator specialization. The estimator is the textbook two-way ANOVA interaction-recovery operation under sum-to-zero contrasts; our contribution is its application to multilingual LLM-judge calibration, the explicit finite-sample concentration bound (Proposition~\ref{prop:convergence}), and the unbiasedness result under task-language misspecification (Proposition~\ref{prop:misspec}).

Two directions remain open: three-way effects and zero-shot transfer to unseen languages. We recommend at least three evaluator backbones, rank-reversal testing, $\hat\beta(\ell, b)$ reporting, and CBC before any model-selection claim.

\section*{Limitations}
\label{sec:limitations}
\textbf{Three-way misspecification.} Proposition~\ref{prop:misspec} establishes unbiasedness for two-way task-language interactions, but it does not cover genuine three-way effects $\gamma(t,\ell,b)$. If some backbones fail on particular task families only in particular languages, then the language$\times$backbone residual is no longer a pure evaluation interaction: it mixes stable language-backbone interaction with task-conditional failure modes. In that regime, double centering can partially absorb real performance heterogeneity rather than only deployment-relevant evaluation variation, and the simplified-model error bars need not carry over unchanged. This is the sharpest structural limitation of the two-way additive model and the main reason we interpret $\hat\beta(\ell,b)$ as a benchmark-level interaction estimate rather than a universal property of a model family.

\textbf{Observed-language calibration versus extrapolation.} Our strongest results are in the observed-language setting, where CBC improves mean pairwise Kendall $\tau$ from $0.650$ to $0.902$. The leave-one-language-out (LOLO) result is an exploratory extrapolation diagnostic: when one language is held out and $\hat\beta(\ell_h,b)$ must be extrapolated heuristically, the mean held-out-to-training agreement improves only from $0.649$ to $0.740$. That is still positive, but clearly smaller than the observed-language gain. We therefore do not claim that the current CBC estimator solves zero-shot transfer to unseen languages; rather, it provides strong correction when all target languages are observed, plus a modest diagnostic signal under simple LOLO heuristics.

\textbf{Complete shared-item panels.} The closed-form estimator and its finite-sample guarantees assume a complete, balanced panel in which every shared task or item is scored in every language-backbone cell. LOLO addresses an unobserved language, not arbitrary missing or unbalanced cells; with partial coverage, naive double-centering need not identify the interaction. Weighted least squares for unequal coverage, matrix completion, and hierarchical mixed-effects or shrinkage estimators are natural future directions for small or incomplete panels; we leave these extensions to future work rather than adding an ad hoc missing-cell experiment.

\textbf{Interaction versus shared language-level effects.} CBC is designed to remove the interaction term $\beta(\ell,b)$, not a language effect shared uniformly across all backbones. If all evaluators were systematically 5 points harsher in one language, that shift would vanish under double centering and remain uncorrected by CBC. Addressing that kind of shared language-level shift requires an external anchor such as human judgments or a trusted calibrated reference model. Our small human-anchor experiment is included only as a downstream validation check, not as part of the CBC estimator itself.

\textbf{Finite-sample uncertainty.} The finite-sample concentration bound is conservative in practice. In the present benchmark ($m=6$, $k=8$, $n=55$), the simultaneous high-probability radius is about $10.0$ points at $\varepsilon=0.05$ using $\hat\sigma=22.24$. The largest observed interaction cells, such as GPT-4o in Spanish ($-20.61$) or GPT-4o in English ($+11.84$) in Figure~\ref{fig:interaction_heatmap}, sit comfortably outside this radius, but moderate cells such as GPT-5.4 in Swahili ($+8.39$) or English ($-7.44$) in the same figure fall within it. The empirical signal is therefore strong for the largest language-backbone affinities, but smaller cells should not be over-interpreted as if they were known with negligible uncertainty.

\textbf{Distributional assumptions.} Proposition~\ref{prop:convergence} assumes independent Gaussian noise with constant variance. That assumption is analytically convenient, but real judge noise is likely heteroskedastic, heavy-tailed, and partly structured by prompt format, task family, or provider behavior. The consistency result only requires weaker moment conditions, but the explicit finite-sample radius does depend on the Gaussian tail calculation. More general sub-Gaussian, robust, or heteroskedastic concentration results would strengthen the theory and make the uncertainty statement less model-dependent.

\textbf{Validation data construction.} The scaled M-RewardBench result uses public benchmark \emph{instances}, but the evaluator score matrix itself was collected by us rather than released by the benchmark authors. This is the right design for CBC, since the public release does not provide evaluator-by-language scores, but it also means the validation panel inherits our rubric, prompting template, provider routing, and parser assumptions. The result therefore demonstrates generalization to an external \emph{instance source}, not validation against an externally supplied fixed score matrix.

\textbf{Benchmark scope.} The internal benchmark still contains only 55 DevAI tasks from one agentic code-evaluation family, even after expansion to eight languages. That is enough to show large and repeatable language$\times$backbone interactions, but it is not enough to claim universality across all agentic coding tasks, all judge prompts, or all evaluation formats. The external public-instance panel broadens the evidence substantially, yet it also focuses on one preference-style benchmark family. Broader validation across additional task domains, more languages, and more evaluator families would strengthen both the empirical claims and the practical scope of CBC.

\textbf{Backbone set and benchmark dependence.} Our conclusions are always relative to a finite set of judge backbones and a fixed benchmark distribution. The non-dominance observation depends on empirical rank reversal in the observed pool; adding or removing a backbone can change whether that condition is satisfied, how strong $\delta$ appears, and which ordering CBC aligns to. Likewise, the estimated $\beta(\ell,b)$ matrix is benchmark-dependent: it is shaped by the prompts, tasks, and scoring rubric used in this study rather than representing a context-free property of the models.

\textbf{Open theoretical directions.} The current non-dominance result is deterministic and conditional on observed reversal. We do not yet provide a fully probabilistic impossibility theorem that would quantify how likely universal dominance is under a generative prior over language-backbone affinities. Developing such a result, together with sharper uncertainty characterizations and explicit models of unseen-language transfer, is an important next step if this line of work is to mature from a benchmark-specific calibration method into a broader statistical theory of multilingual evaluation.

\section*{Reproducibility Statement}

All proofs are provided in full (main text and appendix). Experimental validation builds on the previously introduced five-language multilingual Agent-as-a-Judge benchmark \citep{mahmood2026multilingualpromptlocalizationagentasajudge}, adds our three-language extension and the rank-reversal/CBC analyses, and uses a separately collected evaluator panel over the public M-RewardBench dataset of \citet{gureja-etal-2025-rewardbench}. Because provider billing exports were not preserved consistently, Appendix~\ref{app:costs} reports estimated list-price API costs computed from recorded token counts in the saved artifacts and public model rates. The public repository at \url{https://github.com/KurbanIntelligenceLab/multilingual-judge-calibration} contains the CBC implementation, analysis scripts, external collection pipeline, requirements file with pinned versions, and the derived evaluator-score matrices used in the reported analyses.

\section*{Ethics Statement}
\label{sec:ethics}

This work uses no human annotations and no personal data. The multilingual Agent-as-a-Judge benchmark consists of synthetic software-engineering tasks; the M-RewardBench instances and their gold preference labels are public and released under their original license \citep{gureja-etal-2025-rewardbench}. All evaluator runs were obtained through official commercial APIs under standard terms of service. No human subjects were involved at any stage.

Two scope concerns are worth surfacing. First, CBC calibrates the language-backbone interaction $\beta(\ell, b)$ only for languages observed in the calibration panel; languages outside the panel, including most low-resource languages, do not receive the benefit of calibration and may continue to receive systematically shifted evaluations. Second, CBC corrects the interaction term but not a shared language-level shift $g(\ell)$ (Section~\ref{sec:framework}); if every evaluator backbone systematically under- or over-scores a particular language by the same amount, that shift remains. Treating CBC-aligned rankings as ground truth without an external anchor therefore risks entrenching a shared shift rather than detecting it. Practitioners deploying CBC for fairness-sensitive decisions should combine it with a human-anchor check, such as the gold-preference comparison in Section~\ref{sec:experiments}.


\appendix

\section{Proof Details and Identifiability Scope}
\label{app:proofs}

\paragraph{Witness language pairs (Table~\ref{tab:diversity}).} For each of the seven significant pairs in Table~\ref{tab:diversity}, the witness language pair $(\ell_a, \ell_b)$ used in the test is the language pair attaining the most negative product $d_{ij}(\ell_a)\cdot d_{ij}(\ell_b)$ over the eight observed languages (En, Ar, Zh, Hi, Ja, Es, Tr, Sw): GPT-4o vs.\ GPT-5.4: En/Es; GPT-4o vs.\ Sonnet: Hi/Es; GPT-4o vs.\ Gemini: En/Es; GPT-4o vs.\ DeepSeek: En/Es; GPT-5.4 vs.\ Sonnet: Ja/Sw; GPT-5.4 vs.\ DeepSeek: Zh/Sw; Sonnet vs.\ DeepSeek: Ar/Hi. The remaining eight pairs achieved $\delta = 0$ (no negative product over any observed language pair) and are reported with adjusted $p = 1.000$.

\subsection{Identifiability Proof in Full}

We restate the simplified model:
\[
S(t,\ell,b) = \mu(t) + \alpha(b) + \beta(\ell,b) + \epsilon(t,\ell,b),
\]
with a complete balanced panel over tasks, languages, and backbones. Define the population cell mean
\[
M(\ell,b) \triangleq \frac{1}{n}\sum_{t \in \mathcal{T}} \mathbb{E}[S(t,\ell,b)]
= \bar{\mu} + \alpha(b) + \beta(\ell,b),
\]
where $\bar{\mu} \triangleq \frac{1}{n}\sum_t \mu(t)$ and $\mathbb{E}[\epsilon(t,\ell,b)] = 0$.

Let
\[
M(\cdot,b) \triangleq \tfrac{1}{k}\sum_{\ell'} M(\ell',b),
\]
\[
M(\ell,\cdot) \triangleq \tfrac{1}{m}\sum_{b'} M(\ell,b'),
\]
\[
M(\cdot,\cdot) \triangleq \tfrac{1}{mk}\sum_{\ell',b'} M(\ell',b').
\]
Under the normalization constraints
\[
\sum_{\ell} \beta(\ell,b) = 0 \;\; \forall b,
\quad
\sum_b \beta(\ell,b) = 0 \;\; \forall \ell,
\]
we have
\[
M(\cdot,b) = \bar{\mu} + \alpha(b),
\]
\[
M(\ell,\cdot) = \bar{\mu} + \bar{\alpha},
\quad
M(\cdot,\cdot) = \bar{\mu} + \bar{\alpha},
\]
where $\bar{\alpha} \triangleq \frac{1}{m}\sum_b \alpha(b)$. Therefore
\begin{align*}
&M(\ell,b) - M(\cdot,b) - M(\ell,\cdot) + M(\cdot,\cdot) \\
&\quad= \big(\bar{\mu} + \alpha(b) + \beta(\ell,b)\big) - \big(\bar{\mu} + \alpha(b)\big) \\
&\qquad - \big(\bar{\mu} + \bar{\alpha}\big) + \big(\bar{\mu} + \bar{\alpha}\big) \\
&\quad= \beta(\ell,b).
\end{align*}
Hence the centered interaction matrix is recovered exactly by double centering. Uniqueness follows immediately: if another matrix $\beta'$ with the same zero-sum constraints produced the same cell means, double centering those means would yield both $\beta$ and $\beta'$, so $\beta'=\beta$.

\subsection{What Label-Free Identifiability Means}

The phrase ``identifiable without labels'' should be interpreted narrowly. The observable data determine the centered interaction matrix $\beta(\ell,b)$ for \emph{observed} language-backbone cells. This is sufficient for calibration on those observed languages. However:
\begin{itemize}
\item No human gold labels are needed to estimate $\beta(\ell,b)$.
\item The absolute location of $\mu(t)$ and $\alpha(b)$ is not unique; additive shifts can be absorbed between them.
\item The interaction for an unseen language $\ell'$ or unseen backbone $b'$ is not identifiable without collecting scores for that new cell.
\item A three-way effect $\gamma(t,\ell,b)$ is not identified by the two-way model and cannot be separated from $\beta$ without additional structure.
\end{itemize}

\subsection{Convergence Bound with Exact Constants}

Write the cell-average noise as
\[
\bar{\epsilon}(\ell,b) \triangleq \frac{1}{n}\sum_{t \in \mathcal{T}} \epsilon(t,\ell,b).
\]
Under the Gaussian assumption in Proposition~\ref{prop:convergence}, each $\bar{\epsilon}(\ell,b)$ is Gaussian with mean $0$ and variance $\sigma^2/n$. Since $\hat{\beta}$ is the double-centered cell mean,
\[
\hat{\beta}(\ell,b) - \beta(\ell,b)
= \bar{\epsilon}(\ell,b) - \bar{\epsilon}(\cdot,b) - \bar{\epsilon}(\ell,\cdot) + \bar{\epsilon}(\cdot,\cdot).
\]

For a fixed target cell $(\ell,b)$, the coefficient of each averaged noise term is:
\[
c_{\ell'b'} =
\begin{cases}
1 - \frac{1}{k} - \frac{1}{m} + \frac{1}{mk}, & \ell'=\ell,\; b'=b,\\[4pt]
-\frac{1}{k} + \frac{1}{mk}, & \ell' \neq \ell,\; b'=b,\\[4pt]
-\frac{1}{m} + \frac{1}{mk}, & \ell'=\ell,\; b' \neq b,\\[4pt]
\frac{1}{mk}, & \ell' \neq \ell,\; b' \neq b.
\end{cases}
\]
Because the cell-average noises are independent across $(\ell',b')$,
\[
\mathrm{Var}\!\left[\hat{\beta}(\ell,b)-\beta(\ell,b)\right]
= \frac{\sigma^2}{n}\sum_{\ell',b'} c_{\ell'b'}^2.
\]
Grouping the four coefficient types gives
{\small
\begin{align*}
\sum_{\ell',b'} c_{\ell'b'}^2
&=\Big(1-\tfrac{1}{k}-\tfrac{1}{m}+\tfrac{1}{mk}\Big)^2 \\
&\quad + (k-1)\Big(\tfrac{1}{k}-\tfrac{1}{mk}\Big)^2 \\
&\quad + (m-1)\Big(\tfrac{1}{m}-\tfrac{1}{mk}\Big)^2 \\
&\quad + (m-1)(k-1)\Big(\tfrac{1}{mk}\Big)^2 \\
&= \Big(1-\tfrac{1}{m}\Big)\Big(1-\tfrac{1}{k}\Big).
\end{align*}
}
Therefore
\[
\hat{\beta}(\ell,b)-\beta(\ell,b)
\sim
\mathcal{N}\!\left(
0,\;
\tfrac{\sigma^2}{n}\big(1-\tfrac{1}{m}\big)\big(1-\tfrac{1}{k}\big)
\right).
\]
Applying the standard Gaussian tail bound yields, for any $x>0$,
{\footnotesize
\begin{align*}
&\Pr\!\left(|\hat{\beta}(\ell,b)-\beta(\ell,b)| \ge x\right) \\
&\qquad \le 2\exp\!\left(-\tfrac{n x^2}{2\sigma^2(1-1/m)(1-1/k)}\right).
\end{align*}
}
Finally, applying a union bound over the $mk$ observed language-backbone cells gives
{\footnotesize
\begin{align*}
&\Pr\!\Big(\max_{\ell,b} |\hat{\beta}(\ell,b)-\beta(\ell,b)| \ge x\Big) \\
&\qquad \le 2mk \exp\!\Big(-\tfrac{n x^2}{2\sigma^2(1-1/m)(1-1/k)}\Big),
\end{align*}
}
and solving for $x$ proves Proposition~\ref{prop:convergence}.

\subsection{Consistency Proof in Full}

For each fixed $(\ell,b)$, the cell mean
\[
\bar{S}(\ell,b) = \frac{1}{n}\sum_{t \in \mathcal{T}} S(t,\ell,b)
\]
converges in probability to $M(\ell,b)$ by the law of large numbers, since the task-level observations are independent and have finite first moment. The CBC estimator is a linear transformation of the finite collection of cell means:
\[
\hat{\beta}(\ell,b)
= \bar{S}(\ell,b) - \bar{S}(\cdot,b) - \bar{S}(\ell,\cdot) + \bar{S}(\cdot,\cdot).
\]
Linear combinations preserve convergence in probability, so $\hat{\beta}(\ell,b)$ converges in probability to
\[
M(\ell,b) - M(\cdot,b) - M(\ell,\cdot) + M(\cdot,\cdot)
= \beta(\ell,b),
\]
where the final equality follows from the identifiability argument above. This proves Proposition~\ref{prop:consistency}.

\section{Additional Ablations}
\label{app:ablations}

This appendix contains the three ablation studies referenced from Section~\ref{sec:ablations-summary}, with full tables, a convergence figure, and the original discussion. All ablations use the same expanded eight-language Agent-as-a-Judge benchmark. The backbone and requirement-type ablations use the observed-language bootstrap/OOB protocol, while the task-count ablation uses an independently seeded 100-replicate task-ablation loop described below.

\subsection{Ablation: Number of Backbones ($m$)}

Proposition~\ref{thm:identifiability} requires at least two backbones and two languages; more backbones reduce estimation variance. We test sensitivity to $m$ by subsampling backbone subsets.

\begin{table}[h]
\centering\footnotesize
\renewcommand{\arraystretch}{1.25}
\caption{CBC calibration quality as a function of the number of backbones $m$. Rank $\tau$ is averaged over all $\binom{6}{m}$ subsets using the observed-language bootstrap/OOB protocol.}
\label{tab:ablation_m}
\begin{tabular}{@{}l rrrrr@{}}
\toprule
$m$ & 2 & 3 & 4 & 5 & 6 \\
\midrule
Rank $\tau$ & 0.901 & 0.907 & 0.900 & 0.898 & 0.911 \\
Std.        & 0.206 & 0.105 & 0.075 & 0.044 & --- \\
\bottomrule
\end{tabular}
\end{table}

CBC remains strong even with small backbone subsets (Table~\ref{tab:ablation_m}): the mean $\tau$ is already 0.901 with $m{=}2$ and stays near 0.90 for $m \geq 3$. However, the $m{=}2$ setting is much less stable than the larger subsets (Std.\ $= 0.206$), so the main effect of larger $m$ is reduced variability across subsets rather than a large increase in mean performance.

\subsection{Ablation: Number of Tasks ($n$)}

The convergence bound (Proposition~\ref{prop:convergence}) scales as $O(1/\sqrt{n})$. We test empirically by subsampling tasks. The grid $n \in \{10,20,30,40,55\}$ was chosen to show a simple progression from very small task panels to the full benchmark, using 10-task increments plus the full-data endpoint.

\begin{table}[h]
\centering\footnotesize
\renewcommand{\arraystretch}{1.25}
\caption{CBC calibration quality as a function of the number of tasks $n$. Averaged over 100 random subsamples.}
\label{tab:ablation_n}
\begin{tabular}{@{}l rrrrr@{}}
\toprule
$n$ & 10 & 20 & 30 & 40 & 55 \\
\midrule
Rank $\tau$ & 0.759 & 0.838 & 0.853 & 0.891 & 0.906 \\
Std.        & 0.103 & 0.061 & 0.057 & 0.054 & 0.051 \\
\bottomrule
\end{tabular}
\end{table}

The task ablation shows that CBC remains useful even with only 10 tasks, but improves and becomes more stable as more tasks are available (Table~\ref{tab:ablation_n}). The mean absolute estimation error $|\hat\beta-\beta_{\text{oracle}}|$ drops from 2.01 at $n{=}10$ to 0.67 at $n{=}55$, consistent with the convergence story (Figure~\ref{fig:convergence}).

For a simple practitioner-facing power check, consider the largest observed interaction magnitude, $|\hat\beta|_{\max} = 20.61$ (GPT-4o in Spanish). Under Proposition~\ref{prop:convergence}, the simultaneous high-probability radius drops below half of that scale once $n > 51$, i.e., at about 52 tasks in this benchmark. Empirically, CBC is already fairly stable by $n{=}30$ (mean pairwise $\tau = 0.853$ and mean absolute interaction-estimation error $1.11$), with smaller gains thereafter. For practitioners deploying CBC with $m \geq 3$ backbones, we therefore recommend $n \geq 30$ tasks as a practical minimum for stable interaction estimates, while $n \approx 52$ is a more conservative target if one wants the Proposition~\ref{prop:convergence} radius to fall below half of the largest observed $|\beta|$.

\begin{figure}[h]
\centering
\includegraphics[width=0.98\columnwidth]{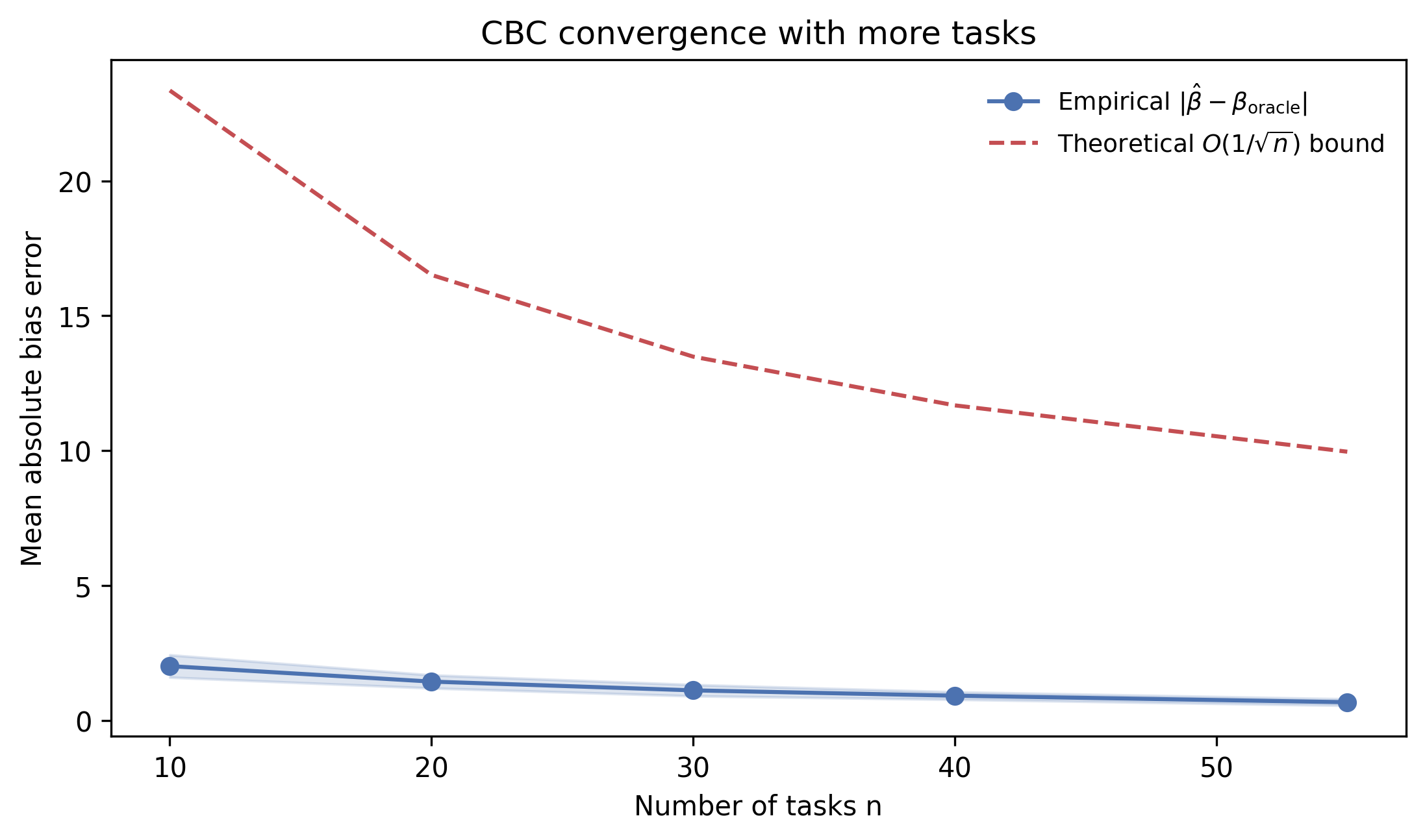}
\caption{Convergence of CBC interaction-estimation error as a function of the number of tasks $n$. The empirical mean absolute error $|\hat\beta-\beta_{\mathrm{oracle}}|$ decreases steadily with more tasks. The dashed reference curve plots the explicit Proposition~\ref{prop:convergence} constant using $\hat{\sigma}=22.24$, $\varepsilon=0.05$, $m=6$, and $k=8$, so it should be read as the paper's conservative finite-sample bound rather than as a generic asymptotic envelope.}
\label{fig:convergence}
\end{figure}

\subsection{Ablation: Requirement-Type Decomposition}

We fit CBC separately for operational types (Data Loading, Training) and semantic types (Model Construction, Evaluation Metrics). The pattern is positive in both cases, though stronger for semantic requirements: for operational requirements, CBC improves stability from $\tau = 0.770$ to $\tau = 0.789$, while for semantic requirements it improves stability from $\tau = 0.737$ to $\tau = 0.840$. This suggests that CBC's largest benefit comes from correcting cross-language instability in semantically richer judgments, but the expanded benchmark also reveals a smaller gain on operational checks.

One plausible explanation is that the operational slice is thinner at the requirement level. Under the benchmark's requirement taxonomy, operational requirements comprise 82 items in total (62 Data Loading + 20 Training), whereas the semantic slice contains 114 items (64 Model Construction + 50 Evaluation Metrics). Thus the operational split has less per-task signal available for estimating the language-backbone interaction. At the same time, the gap is not purely a task-count artifact: operational requirements still appear in 54 of the 55 tasks, so the weaker gain is better understood as a combination of lower slice size and lower raw instability. Operational checks are often closer to concrete existence or execution conditions, which are already relatively stable across languages ($\tau = 0.770$ before calibration), whereas semantic checks require more language-sensitive judgment about modeling choices and evaluation adequacy, leaving more multilingual interaction for CBC to remove.

\section{External Validation Details}
\label{app:external_details}

This appendix expands the M-RewardBench validation setting summarized in Section~\ref{sec:expanded}.

\paragraph{Collection pipeline.} M-RewardBench provides aligned multilingual preference instances across 23 languages, but does not release evaluator-by-language score matrices, only the public instances. We built a collection pipeline that queries each evaluator with a fixed 1--5 pointwise rubric on chosen/rejected responses, then converts chosen-minus-rejected margins into CBC-ready task$\times$language$\times$evaluator tables. The completed panel covers 7 overlapping languages (English, Arabic, Turkish, Simplified Chinese, Hindi, Japanese, Spanish), 1{,}500 aligned items per language (10{,}500 language-item instances total), and 5 evaluator backbones (Claude Sonnet 4.6, DeepSeek-V3.2, Gemini 3 Flash Preview, GPT-4o, GPT-5.4). This corresponds to 52{,}500 judged preference pairs, or 105{,}000 pointwise evaluator calls once chosen and rejected responses are scored separately.

\paragraph{Adapted judge-aware Bradley--Terry--Luce baseline.} As the substantive external pairwise comparator, we adapt Xu et al.'s \citeyearpar{xu2026judgeaware} judge-aware Bradley--Terry--Luce model to the evaluator-ranking target. For each language we convert item-level chosen-minus-rejected margins into pairwise evaluator wins (sign of the margin difference), then fit the BTL likelihood with judge-specific discrimination parameters. Under the same bootstrap/OOB protocol as the main benchmark, this baseline reaches $\tau = 0.405$ (95\% CI $[0.352, 0.467]$), close to Raw and well below CBC. The comparison is informative because the two methods operate in different input regimes: the Xu-style model is appropriate when only pairwise wins/ties are available and confidence intervals over latent rankings are desired, whereas CBC exploits the stronger pointwise-margin regime and outperforms this comparator when the dominant issue is an additive language-backbone interaction in observed score matrices.

\begin{table*}[h]
\centering\footnotesize
\renewcommand{\arraystretch}{1.2}
\setlength{\tabcolsep}{6pt}
\caption{Estimated API cost breakdown for the two added experimental components. ``Calls'' counts direct model invocations; the M-RewardBench panel contains 1{,}500 items per language (10{,}500 language-item instances) and scores both chosen and rejected responses separately, so 52{,}500 judged pairs correspond to 105{,}000 pointwise calls. Costs are estimates from recorded token counts and public list prices, not billing exports.}
\label{tab:cost_breakdown}
\begin{tabular}{@{}lrrrr@{}}
\toprule
\textbf{Component} & \textbf{Calls} & \textbf{Input tokens} & \textbf{Output tokens} & \textbf{Est.\ cost} \\
\midrule
3-language extension (JA / ES / SW) & 19{,}710 & 25.68M & 6.64M & \$54.93 \\
M-RewardBench 1{,}500-items-per-language panel & 105{,}000 & 52.06M & 9.85M & \$195.30 \\
\bottomrule
\end{tabular}
\end{table*}

\paragraph{In-sample full-fit diagnostic.} The bootstrap train/OOB number is the headline in Table~\ref{tab:external_validation}. We additionally report an in-sample full-fit diagnostic: $\hat\beta$ is estimated on all 1{,}500 items per language and $\tau$ is computed on that same fitted panel. The resulting $\tau = 1.000$ should be read as a sanity check on the recovered shared ordering, not as an independent generalization estimate. The adapted Xu baseline stays at the raw full-panel level ($\tau = 0.410$), consistent with the fact that it discards margin magnitudes and retains only pairwise orderings within each item.

\paragraph{Panel-construction notes.} We attempted to include a Qwen3 evaluator in the M-RewardBench panel but excluded it from the completed panel because its provider run produced incomplete chosen/rejected outputs and never reached a clean terminal state. M-Prometheus \citep{pombal2025mprometheus} is positioned by its authors as a multilingual judge-training resource rather than a public evaluator-output release on a shared benchmark, so we did not adopt it as a validation panel here.

\section{Estimated API Cost Breakdown}
\label{app:costs}

Table~\ref{tab:cost_breakdown} reports estimated API costs for the two added experimental components that required fresh model calls. These are \emph{not} billing-export totals. Instead, we aggregate the recorded input/output token counts stored in the saved artifacts and multiply by contemporaneous public list prices for the corresponding model versions. This was necessary because provider-side dollar fields were preserved inconsistently across runs.

\subsection{Appendix Note on Quantile Normalization}
\label{app:quantile_note}

We audited the quantile-normalization baseline and replaced an earlier column-wise empirical-CDF transform with the standard target-distribution formulation used by \texttt{preprocessCore::normalize.quantiles}. The exact implementation used for Table~\ref{tab:calibration} is:

\begin{small}
\begin{verbatim}
def col_quantile(matrix, train_matrix):
    sorted_train = np.sort(
        train_matrix.to_numpy(dtype=float),
        axis=0)
    target = sorted_train.mean(axis=1)
    qdf = pd.DataFrame(
        index=matrix.index,
        columns=matrix.columns, dtype=float)
    for b in matrix.columns:
        v = matrix[b].to_numpy(dtype=float)
        order = np.argsort(v, kind="mergesort")
        sv = v[order]
        ns = target.copy()
        s = 0
        while s < len(sv):
            e = s + 1
            while e < len(sv) and sv[e] == sv[s]:
                e += 1
            if e - s > 1:
                ns[s:e] = float(np.mean(ns[s:e]))
            s = e
        nv = np.empty_like(v, dtype=float)
        nv[order] = ns
        qdf[b] = nv
    return qdf
\end{verbatim}
\end{small}

On the full benchmark score matrix, this implementation matches an independent NumPy implementation of the standard sort-average target-distribution algorithm exactly (maximum absolute difference $0.0$).

\subsection{Appendix Note on Weighted CBC}

We also explored a weighted variant of CBC that replaces the uniform backbone average with weights $w_{b'} \propto \mathrm{Var}_\ell[\bar{S}(\ell, b')]^{-1}$. On the expanded multilingual Agent-as-a-Judge benchmark, this variant matched uniform CBC on the observed-language bootstrap/OOB metric ($\tau = 0.902$ for both), so we omit it from the main comparison table and do not treat it as a separate method in the main paper.

\subsection{Appendix Note on Dawid--Skene EM}
\label{app:ds_em_note}

We also tested a Dawid--Skene EM baseline \citep{dawid1979maximum} by binarizing requirement outcomes and treating each language-backbone pair as an annotator. On this benchmark it produced a highly unstable bootstrap summary ($\tau = 0.446$ with 95\% CI $[-0.042, 1.000]$), indicating that the binary latent-class model was a poor fit for our continuous score-matrix setting. Because this interval spans nearly the full range of possible outcomes, we do not treat Dawid--Skene EM as a useful main-table baseline here.

\end{document}